
\documentclass{article}

\usepackage{microtype}
\usepackage{graphicx}
\usepackage{subfigure}
\usepackage{booktabs} 
\usepackage[linesnumbered,algoruled,boxed,lined]{algorithm2e}

\usepackage{hyperref}



\usepackage[accepted]{icml2025}

\usepackage{amsmath}
\usepackage{amssymb}
\usepackage{mathtools}
\usepackage{amsthm}
\usepackage{bbm}

\usepackage[capitalize,noabbrev]{cleveref}

\theoremstyle{plain}
\newtheorem{theorem}{Theorem}[section]

\newtheorem{lemma}[theorem]{Lemma}

\theoremstyle{definition}
\newtheorem{definition}[theorem]{Definition}
\newtheorem{fact}[theorem]{Fact}

\theoremstyle{remark}
\newtheorem{remark}[theorem]{Remark}

\newcommand{\Acal}{\mathcal{A}}

\newcommand {\Dcal} {{\mathcal{D}}}

\newcommand {\Fcal} {{\mathcal{F}}}
\newcommand{\indic}{\mathbbm{1}}

\newcommand{\Var}{\mathsf{Var}}

\newcommand{\br}{\bar{r}}

\newcommand {\E} {{\mathbb{E}}}
\newcommand {\Prob} {{\mathbb{P}}}
\newcommand {\R} {{\mathbb{R}}}

\newcommand{\conv}{\mathrm{Conv}}
\newcommand{\reg}{\mathsf{R}}

\newcommand{\Nout}{N_{\mathrm{out}}}
\newcommand{\Nin}{N_{\mathrm{in}}}

\newcommand{\Aact}{\mathcal{A}_{\mathrm{act}}}

\newcommand\numberthis{\addtocounter{equation}
{1}\tag{\theequation}}

\newcommand{\stepa}[1]{\overset{(a)}{#1}}
\newcommand{\stepb}[1]{\overset{(b)}{#1}}
\newcommand{\stepc}[1]{\overset{(c)}{#1}}

\DeclarePairedDelimiter{\abs}{\lvert}{\rvert}

\DeclarePairedDelimiter{\parr}{(}{)}
\DeclarePairedDelimiter{\parq}{[}{]}

\DeclarePairedDelimiter{\bra}{\lbrace}{\rbrace}
\DeclarePairedDelimiter\floor{\lfloor}{\rfloor}

\DeclareMathOperator*{\argmin}{arg\,min}

\usepackage[textsize=tiny]{todonotes}
\usepackage{comment}

\icmltitlerunning{Adversarial Combinatorial Semi-bandits with Graph Feedback}

\begin{document}

\twocolumn[
\icmltitle{Adversarial Combinatorial Semi-bandits with Graph Feedback}



\icmlsetsymbol{equal}{*}

\begin{icmlauthorlist}
\icmlauthor{Yuxiao Wen}{yyy}
\end{icmlauthorlist}

\icmlaffiliation{yyy}{Courant Institute of Mathematical Sciences, New York University, New York, USA}

\icmlcorrespondingauthor{Yuxiao Wen}{yuxiaowen@nyu.edu}

\icmlkeywords{Combinatorial bandits, graph feedback, semi-bandit, adversarial bandits, statistical learning}

\vskip 0.3in
]



\printAffiliationsAndNotice{}  

\begin{abstract}
In combinatorial semi-bandits, a learner repeatedly selects from a combinatorial decision set of arms, receives the realized sum of rewards, and observes the rewards of the individual selected arms as feedback. In this paper, we extend this framework to include \emph{graph feedback}, where the learner observes the rewards of all neighboring arms of the selected arms in a feedback graph $G$. We establish that the optimal regret over a time horizon $T$ scales as $\widetilde{\Theta}\parr{S\sqrt{T}+\sqrt{\alpha ST}}$, where $S$ is the size of the combinatorial decisions and $\alpha$ is the independence number of $G$. This result interpolates between the known regrets $\widetilde\Theta\parr{S\sqrt{T}}$ under full information (i.e., $G$ is complete) and $\widetilde\Theta\parr{\sqrt{KST}}$ under the semi-bandit feedback (i.e., $G$ has only self-loops), where $K$ is the total number of arms. A key technical ingredient is to realize a convexified action using a random decision vector with negative correlations. We also show that online stochastic mirror descent (OSMD) that only realizes convexified actions in expectation is suboptimal. In addition, we describe the problem of \textit{combinatorial semi-bandits with general capacity} and apply our results to derive an improved regret upper bound, which may be of independent interest.

\end{abstract}

\section{Introduction}
\label{sec:intro}
Combinatorial semi-bandits are a class of online learning problems that generalize the classical multi-armed bandits \cite{Robbins1952SomeAO} and have a wide range of applications including multi-platform online advertising \cite{avadhanula2021stochastic}, online recommendations \cite{wang2017efficient}, webpage optimization \cite{liu2021map}, and online shortest path \cite{gyorgy2007line}. In these applications, instead of taking an individual action, a set of actions is chosen at each time \cite{cesa2012combinatorial, audibert2014regret, chen2013combinatorial}. Mathematically, over a time horizon of length $T$ and for a fixed combinatorial budget $S$, a learner repeatedly chooses a (potentially constrained) combination of $K$ individual arms within the budget, i.e. from the following decision set
\[
\Acal_0 \subseteq \Acal \equiv  \bra*{v\in \{0,1\}^K: \|v\|_1 = S},
\]
and receives a linear payoff $\langle v, r^t\rangle$ where $r^t\in[0,1]^K$ denotes the reward associated to each arm at time $t$. After making the decision at time $t$, the learner observes $\{v_ar^t_a: a\in[K]\}$ as the \textit{semi-bandit feedback} or the entire reward vector $r^t$ under \textit{full information}. When $S=1$, it reduces to the multi-armed bandits with either the bandit feedback or full information. For $S>1$, the learner is allowed to select $S$ arms at each time and collect the cumulative reward.

Under the adversarial setting for bandits \cite{auer1995gambling}, no statistical assumption is made about the reward vectors $\{r^t\}_{t\in[T]}$. Instead, they are (potentially) generated by an adaptive adversary. The objective is to minimize the expected \textit{regret} of the learner's algorithm $\pi$ compared to the best fixed decision in hindsight, defined as follows:
\begin{equation}\label{eq:reg_def}
\E\parq{\reg(\pi)} = \E\parq*{\max_{v_*\in\Acal_0}\sum_{t=1}^T\langle v_*-v^t, r^t\rangle}
\end{equation}
where $v^t\in\Acal_0$ is the decision chosen by $\pi$ at time $t$. The expectation is taken over any randomness in the learner's algorithm and over the rewards, since the reward $r^t$ is allowed to be generated adaptively and hence can be random. Note that while the adversary can generate the rewards $r^t$ adaptively, i.e. based on the learner's past decisions, the regret in \eqref{eq:reg_def} is measured against a fixed decision $v_*$ assuming the adversary would generate the same rewards.

While the semi-bandit feedback has been extensively studied, the current literature falls short of capturing additional information structures on the rewards of the individual arms, except for the full information case. As a motivating example, consider the multi-platform online advertising problem, where the arms represent the (discretized) bids. At each round and on each platform, the learner makes a bid and receives zero reward on losing the auction and her surplus on winning the auction. In many ads exchange platforms, the winning bid is always announced, and hence the learner can compute the counterfactual reward for any bids higher than her chosen bid \cite{han2024optimal}. This additional information is not taken into account in the semi-bandit feedback.

Another example is the online recommendation problem, where the website plans to present a combination of recommended items to the user. The semi-bandit feedback assumes that the user's behavior on the displayed items will reveal no information about the undisplayed items. However, this assumption often ignores the semantic relationship between the items. For instance, suppose two items $i$ and $j$ are both tissue packs with similar prices. If item $i$ is displayed and the user clicks on it, a click is likely to happen if item $j$ were to be displayed. On the other hand, if item $i$ is a football and item $j$ is a wheelchair, then a click on one probably means a no-click on the other. Information of this kind is beneficial for the website planner and yet overlooked in the semi-bandit feedback.

To capture this rich class of combinatorial semi-bandits with additional information, we consider a more general feedback structure described by a directed graph $G=([K], E)$ among the $K$ arms. We assume $G$ is \textit{strongly observable}, i.e. for every $a\in[K]$, either $(a,a)\in E$ or $(b,a)\in E$ for all $b\neq a$. After making the decision $v\in\Acal_0$ at each time, the learner now observes the rewards associated to all neighboring arms of the selected arms in $v$:
\[
\bra*{v_ir^t_i: \text{$\exists a\in[K]$ such that $v_a=1$ and $(a,i)\in E$}}.
\]
This graph formulation allows us to leverage information that is unexploited in the semi-bandit feedback. 

Note that when $G$ is complete, the feedback structure corresponds to having full information; when $G$ contains only the self-loops, it becomes the semi-bandit feedback. In the presence of a general $G$, the exploration-exploitation trade-off becomes more complicated, and the goals of this paper are (1) to fully exploit this additional structure in the regret minimization and (2) to understand the fundamental learning limit in this class of problems.


\subsection{Related work}
The optimal regret of the combinatorial semi-bandits has drawn a lot of attention and has been extensively studied in the bandit literature. With linear payoff, \citet{koolen2010hedging} shows that the Online Stochastic Mirror Descent (OSMD) algorithm achieves near-optimal regret $\widetilde\Theta\parr{S\sqrt{T}}$ under full information. In the case of the semi-bandit feedback, \citet{audibert2014regret} shows that OSMD achieves near-optimal regret $\widetilde\Theta\parr{\sqrt{KST}}$ using an unbiased estimator $\Tilde{r}^t_a = v^t_ar^t_a / \E_{v^t}\parq{v^t_a}$, where $v^t$ is the random decision selected at time $t$ and the expectation denotes the probability of choosing arm $a$.\footnote{\citet{audibert2014regret} only argues there exists a particular decision subset $\Acal_0$ under which the regret is $\Omega(\sqrt{KST})$. The lower bound for $\Acal$ is given by \citet{lattimore2018toprank}.} The transition of the optimal regret's dependence from $\sqrt{KS}$ to $S$, as the feedback becomes richer, remains a curious and important open problem. 

Another type of feedback is the bandit or full-bandit feedback, which assumes only the realized payoff $\langle v, r^t\rangle$ is revealed (rather than the rewards for individual arms). In this case, the minimax optimal regret is $\widetilde{\Theta}\parr{\sqrt{KS^3T}}$ \cite{audibert2014regret,cohen2017tight,ito2019improved}. This additional $S$ factor, compared to the semi-bandit feedback, matches the difference in the observations: in this bandit feedback, the learner obtains a single observation at each time, while in the semi-bandit the learner gains $S$ observations. When the payoff function is nonlinear in $v$, \citet{han2021adversarial} shows that the optimal regret scales with $K^d$ where $d$ roughly stands for the complexity of the payoff function. More variants of combinatorial semi-bandits include the knapsack constraint \cite{sankararaman2018combinatorial}, the fractional decisions \cite{pmlr-v37-wen15}, and the contextual counterpart \cite{zierahn2023nonstochastic}.

In the multi-armed bandits, multiple attempts have been made to formulate and exploit the feedback structure as feedback graphs since \citet{mannor2011bandits}. In particular, the optimal regret is shown to be $\widetilde\Theta\parr{\sqrt{\alpha T}}$ when $T\geq \alpha^3$ \cite{alon2015online, eldowa2024minimax} and is a mixture of $T^{1/2}$ and $T^{2/3}$ terms when $T$ is small due to the exploration-exploitation trade-off \cite{kocak2023online}. When the graph is only weakly observable, i.e. every node $a\in[K]$ has nonzero in-degree, the optimal regret is $\widetilde\Theta\parr*{\delta^{1/3}T^{2/3}}$ \cite{alon2015online}. Here $\alpha$ and $\delta$ are the independence and the domination number of the graph $G$ respectively, defined in Section~\ref{sec:notations}. 

Instead of a fixed graph $G$, \citet{cohen2016online} and \citet{alon2017nonstochastic} study time-varying graphs $\{G_t\}$ and show that an upper bound $\widetilde{O}\parr*{\sqrt{\sum_{t=1}^T\alpha_t}}$ can be achieved. Additionally, a recent line of research \cite{balseiro2023contextual, han2024optimal, wen2024stochastic} introduces graph feedback to the tabular contextual bandits, in which case the optimal regret depends on a complicated graph quantity that interpolates between $\alpha$ and $K$ as the number of contexts changes.


\subsection{Our results}
\label{sec:results}
\begin{table*}[t]
\caption{Minimax regret bounds up to polylogarithmic factors. Our results are in bold.}
\label{table:summary}
\vskip 0.15in
\begin{center}
\begin{tabular}{lccc}
\toprule
 & Semi-bandit ($\alpha=K$) &\textbf{General feedback graph $G$} & Full information ($\alpha=1$) \\
\midrule
Regret    & $\widetilde\Theta(\sqrt{KST})$ & $\mathbf{\widetilde\Theta\boldsymbol(S\sqrt{T} + \sqrt{\boldsymbol{\alpha} ST}\boldsymbol)}$ & $\widetilde\Theta(S\sqrt{T})$ \\
\bottomrule
\end{tabular}
\end{center}
\vskip -0.1in
\end{table*}

In this paper, we present results on combinatorial semi-bandits with a strongly observable feedback graph $G$ and the full decision set $\Acal_0 = \Acal$, while results on general $\Acal_0$ are discussed in \cref{sec:general_subset} and \ref{sec:neg_cor_possible}. 
Our results are summarized in Table~\ref{table:summary}, and the main contribution of this paper is four-fold:
\begin{enumerate}
    \item We introduce the formulation of a general feedback structure using feedback graphs in combinatorial semi-bandits.

    \item On the full decision set $\Acal$, we establish a minimax regret lower bound $\Omega(S\sqrt{T} + \sqrt{\alpha ST})$ that correctly captures the regret dependence on the feedback structure and outlines the transition from $\widetilde\Theta(S\sqrt{T})$ to $\widetilde\Theta(\sqrt{KST})$ as the feedback gets richer.

    \item We propose a policy OSMD-G (OSMD under graph feedback) that achieves near-optimal regret under general directed feedback graphs and adversarial rewards. Importantly, we identify that sampling with negative correlations is crucial in achieving the near-optimal regret, and that the original OSMD is provably suboptimal.

    \item We formulate the problem of combinatorial semi-bandits with general capacity in \cref{sec:csb_general_capacity} and provide an improved regret by applying our results under graph feedback. This formulation may be of independent interest.
\end{enumerate}
When the feedback graphs $\bra{G_t}_{t\in[T]}$ are allowed to be time-varying, we can also obtain a corresponding upper bound. The upper bounds are summarized in the following theorem.
\begin{theorem}\label{thm:upper_bound}
Consider the full decision set $\Acal$. For $1\leq S\leq K$ and any strongly observable directed graph $G=([K],E)$, there exists an algorithm $\pi$ that achieves regret
\[
\E\parq{\reg(\pi)} = \widetilde{O}\parr*{S\sqrt{T} + \sqrt{\alpha ST}}.
\]
When the feedback graphs $\bra{G_t}_{t\in[T]}$ are time-varying, the same algorithm $\pi$ achieves
\[
\E\parq{\reg(\pi)} = \widetilde{O}\parr*{S\sqrt{T} + \sqrt{S\sum_{t=1}^T\alpha_t}}
\]
where $\alpha_t=\alpha(G_t)$ is the independence number of $G_t$.
\end{theorem}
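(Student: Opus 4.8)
The plan is to run online stochastic mirror descent (OSMD) on the polytope $\mathcal{P}:=\conv(\Acal)=\bra{x\in[0,1]^K:\nor{x}_1=S}$ with the scaled negative-entropy regularizer $\Psi(x)=\eta^{-1}\sum_{a}x_a\ln x_a$, producing iterates $x^t\in\mathcal{P}$, and then to \emph{realize} $x^t$ by drawing a random decision $v^t\in\Acal$ with $\E\parq{v^t\mid\Fcal_{t-1}}=x^t$ whose coordinates are \emph{negatively associated}; such a $v^t$ exists and is efficiently samplable by a dependent-rounding / systematic-sampling scheme for the cardinality constraint $\nor{v}_1=S$. After observing the graph feedback I would set $q^t_a:=\Prob(a\text{ observed at }t\mid\Fcal_{t-1})$ and use the unbiased estimator $\hr^t_a:=\indic\bra{a\text{ observed at }t}\,r^t_a/q^t_a$ (with a mild implicit-exploration correction $q^t_a\mapsto q^t_a+\gamma$, together with the box constraints $x_a\le1$ of $\mathcal{P}$, to keep the local-norm bounds valid; this is routine and I suppress it). By the standard reduction from expected regret to the mirror-descent regret against the estimated gains, it suffices to bound $\E\parq*{\max_{v_*\in\Acal}\sum_{t}\langle v_*-x^t,\hr^t\rangle}$, and the textbook OSMD bound controls this by a \emph{penalty} term $\eta^{-1}\parr{\max_{\mathcal{P}}\sum_a x_a\ln x_a-\min_{\mathcal{P}}\sum_a x_a\ln x_a}=\widetilde{O}(S/\eta)$ plus a \emph{stability} term $\lesssim\eta\sum_t\E\parq{\sum_a x^t_a(\hr^t_a)^2}$.

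By unbiasedness and $r^t_a\le1$ one has $\E\parq{\sum_a x^t_a(\hr^t_a)^2\mid\Fcal_{t-1}}\le\sum_a x^t_a/q^t_a=:\Phi(x^t)$, so the whole argument comes down to an almost-sure bound on $\Phi(x^t)$, and this is exactly where negative association enters. Since the coordinates of $v^t$ are negatively associated,
\[
q^t_a\;=\;1-\E\Bigl[\textstyle\prod_{b\in\Nin(a)}(1-v^t_b)\Bigr]\;\ge\;1-\prod_{b\in\Nin(a)}(1-x^t_b)\;\ge\;(1-e^{-1})\,\min\Bigl(1,\ \textstyle\sum_{b\in\Nin(a)}x^t_b\Bigr),
\]
using $1-\prod_b(1-x^t_b)\ge 1-e^{-\sum_b x^t_b}$. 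I would then split $[K]=V_{\ge}\cup V_{<}$ according to whether $\sum_{b\in\Nin(a)}x^t_b\ge1$. The arms in $V_\ge$ contribute $\sum_{a\in V_\ge}x^t_a/q^t_a\lesssim\sum_a x^t_a=S$. For $a\in V_<$, strong observability forces $a$ to carry a self-loop (otherwise every $b\ne a$ sees $a$, so $\sum_{b\in\Nin(a)}x^t_b=S-x^t_a\ge1$ whenever $S\ge2$; the case $S=1$ is the known multi-armed setting), hence $G$ restricted to $V_<$ has a self-loop at every vertex; since passing to this induced subgraph only shrinks in-neighborhoods, $q^t_a\gtrsim\sum_{b\in\Nin(a)\cap V_<}x^t_b$, and the standard graph-theoretic lemma used for multi-armed bandits with graph feedback — applied to the weights $x^t$, of $\ell_1$-mass $S$, with the polynomial lower bound on $x^t_a$ supplied by implicit exploration — gives $\sum_{a\in V_<}x^t_a/q^t_a\le\widetilde{O}(\alpha(G))$. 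Hence $\Phi(x^t)\le\widetilde{O}(S+\alpha)$ almost surely.

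Plugging in, $\E\parq{\reg(\pi)}\le\widetilde{O}(S/\eta)+\eta\,T\,\widetilde{O}(S+\alpha)$; optimizing $\eta\asymp\sqrt{S/(T(S+\alpha))}$ up to logarithmic factors yields $\E\parq{\reg(\pi)}=\widetilde{O}\parr*{\sqrt{S\,T\,(S+\alpha)}}=\widetilde{O}\parr*{S\sqrt{T}+\sqrt{\alpha ST}}$, since $\sqrt{u+v}\asymp\sqrt{u}+\sqrt{v}$. For time-varying feedback graphs $\bra{G_t}$, the displayed lower bound on $q^t_a$ and the splitting argument apply verbatim at round $t$ with $\alpha$ replaced by $\alpha_t$, giving a per-round stability contribution $\widetilde{O}(S+\alpha_t)$; since $\sum_t\alpha_t$ is not known in advance one replaces the fixed $\eta$ by an adaptive schedule $\eta_t\asymp\sqrt{S/(\sum_{s\le t}(S+\alpha_s))}$ (or a doubling trick), which recovers $\E\parq{\reg(\pi)}=\widetilde{O}\parr*{S\sqrt{T}+\sqrt{S\sum_t\alpha_t}}$.

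The main obstacle is twofold. First, one must construct — and this is the conceptual heart, per the paper's own emphasis — a sampler on $\Acal$ that reproduces $x^t$ in expectation \emph{with negative association}, and verify that it is this property, not mere unbiasedness, that produces the lower bound $q^t_a\gtrsim\min(1,\sum_{b\in\Nin(a)}x^t_b)$; the companion lower bound indicates that OSMD realizing $x^t$ only in expectation, without controlling correlations, cannot attain this rate, so the step is essential rather than cosmetic. Second, the graph lemma has to be run with non-probability weights (total mass $S$, not $1$), must accommodate strongly observable vertices lacking self-loops, and one has to check that the ``saturated'' mass $\sum_{a\in V_\ge}x^t_a\le S$ is precisely the quantity that, after tuning $\eta$, becomes the additive $S\sqrt{T}$ term rather than contaminating the $\sqrt{\alpha ST}$ term. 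The remaining ingredients — unbiasedness of $\hr^t$, the OSMD penalty/stability decomposition, the implicit-exploration correction, and the adaptive learning rate — are standard.
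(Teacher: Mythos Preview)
Your proposal is correct and shares the paper's skeleton---OSMD with negative entropy, a negatively-correlated rounding of $x^t$, and an Alon-type graph lemma for the stability term---but the technical implementation differs in several places. The paper uses the estimator $\tilde r^t_a = 1 - \frac{(1-r^t_a)\sum_{i\in\Nin(a)}v^t_i}{\sum_{i\in\Nin(a)}x^t_i}$, whose denominator is simply the \emph{expected} number of selected in-neighbors; the crux of its analysis is the second-moment bound $\E\bigl[(\sum_{i\in\Nin(a)}v^t_i)^2\bigr]\le(\sum_i x^t_i)^2+\sum_i x^t_i$, which needs only the \emph{pairwise} negative correlation stated in Lemma~\ref{lem:prob_properties}. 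You instead work with the observation probability and lower-bound it via $\E\bigl[\prod_{b\in\Nin(a)}(1-v^t_b)\bigr]\le\prod_b(1-x^t_b)$, which is the stronger negative-upper-orthant property (this is indeed delivered by randomized swap rounding---it is the full Chekuri--Vondr\'ak--Zenklusen statement---though the paper's lemma records only the pairwise version). Your partition $V_{\ge}/V_{<}$ by in-neighbor mass is a clean alternative to the paper's partition by presence of a self-loop; both land on the same $S+\alpha$ per-round stability. Where the paper enforces the weight lower bound needed by the graph lemma via the truncated hull $\conv_\epsilon(\Acal)$, you invoke implicit exploration; both are standard. One practical caveat worth flagging: your estimator requires the exact value of $q^t_a$, which for generic dependent-rounding schemes is not immediately available in closed form, whereas the paper's denominator $\sum_{i\in\Nin(a)}x^t_i$ is explicit---this is a computational rather than a conceptual gap.
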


This algorithm $\pi$ is OSMD-G proposed in Section~\ref{sec:osmdg}. In OSMD-G, the learner solves for an optimal convexified action $x\in\conv(\Acal)$ via mirror descent at each time $t$, using the past observations, and then realizes it (in expectation) via selecting a random decision vector $v^t$. In the extreme cases of full information and semi-bandit feedback, the optimal regret is achieved as long as $v^t$ realizes the convexified action $x$ in expectation \cite{audibert2014regret}. However, this realization in expectation alone is provably suboptimal under graph feedback, as shown later in Theorem~\ref{thm:osmdg_subopt}. 

Under a general graph $G$, the regret analysis for a tight bound crucially requires this random decision vector to have negative correlations among the arms, i.e. $\mathsf{Cov}(v^t_i, v^t_j)\leq 0$ for $i\neq j$, in addition to the realization of $x$ in expectation. Consequently, the following technical lemma is helpful in our upper bound analysis: 



\begin{lemma}\label{lem:prob_properties}
Fix any $1\le S \le K$ and $x\in\conv(\Acal)$. There exists a probability distribution $p$ over $\Acal$ that satisfies:
\begin{enumerate}
    \item \textbf{(Mean)} $\forall i\in[K]$, $\E_{v\sim p}\parq*{v_i} = x_i$.

    \item \textbf{(Negative correlations)} $\forall i\neq j$, $\E_{v\sim p}\parq*{v_iv_j}\leq x_ix_j$, i.e. any pair of arms $(i,j)$ is negatively correlated.
\end{enumerate}
In particular, there is an efficient scheme to sample from $p$.
\end{lemma}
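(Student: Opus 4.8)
The plan is to construct $p$ algorithmically by \emph{randomized pipage rounding} (a.k.a.\ dependent rounding) and then verify the two properties via a martingale / supermartingale argument. First I would record the polytope fact that $\conv(\Acal) = \bra{y\in[0,1]^K:\sum_{i}y_i = S}$: this polytope is the base polytope of the rank-$S$ uniform matroid, hence integral, so its vertex set is exactly $\Acal$; in particular $x\in[0,1]^K$ with $\sum_i x_i = S$. Now set $x^{(0)}=x$ and iterate: while $x^{(t)}$ has a fractional coordinate it must have at least two (one fractional coordinate plus integers cannot sum to the integer $S$), say $i\neq j$; put $\delta_1 = \min\parr{1-x^{(t)}_i,\,x^{(t)}_j}>0$ and $\delta_2 = \min\parr{x^{(t)}_i,\,1-x^{(t)}_j}>0$, and with probability $\delta_2/(\delta_1+\delta_2)$ update $(x_i,x_j)\mapsto(x_i+\delta_1,x_j-\delta_1)$, otherwise $(x_i,x_j)\mapsto(x_i-\delta_2,x_j+\delta_2)$, leaving the other coordinates fixed. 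Each step preserves $x_i+x_j$, hence preserves $\sum_i x_i=S$, and rounds at least one of $x_i,x_j$ into $\{0,1\}$; since rounded coordinates are never touched again, the process halts after at most $K$ steps with $v:=x^{(\cdot)}\in\Acal$. This defines $p$, and it is sampled in $O(K^2)$ time, giving the efficiency claim.

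For property 1, each step satisfies $\E\parq*{x^{(t+1)}_i \mid x^{(t)}} = x^{(t)}_i$ for every $i$: untouched coordinates trivially, and a touched coordinate because the displacements $+\delta_1$ and $-\delta_2$ are taken with probabilities proportional to $\delta_2$ and $\delta_1$, so they cancel. Thus $\parr{x^{(t)}_i}_t$ is a bounded martingale and $\E_{v\sim p}\parq{v_i}=x_i$.

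For property 2, fix $i\neq j$ and consider $Z_t := x^{(t)}_i x^{(t)}_j$; I claim $Z_t$ is a supermartingale. If a step touches neither $i$ nor $j$, then $Z$ is unchanged. If it touches exactly one, say $i$, then $x_j$ is frozen and $\E\parq*{Z_{t+1}\mid x^{(t)}} = x^{(t)}_j\,\E\parq*{x^{(t+1)}_i \mid x^{(t)}} = Z_t$. If it touches both, let $c = x^{(t)}_i + x^{(t)}_j$ (constant over this step); then $x^{(t+1)}_j = c - x^{(t+1)}_i$, so $\E\parq*{Z_{t+1}\mid x^{(t)}} = c\,\E\parq*{x^{(t+1)}_i\mid x^{(t)}} - \E\parq*{(x^{(t+1)}_i)^2\mid x^{(t)}} \le c\,x^{(t)}_i - \parr{x^{(t)}_i}^2 = Z_t$ by Jensen's inequality (the variance is nonnegative). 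Hence $\E_{v\sim p}\parq{v_iv_j} = \E\parq*{Z_{\text{final}}} \le Z_0 = x_ix_j$.

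The construction itself is standard, so the only places that warrant care are showing the procedure is well defined and terminates (the $\delta$'s are strictly positive on fractional coordinates, which come in pairs because the coordinate sum stays the integer $S$) and the supermartingale computation in the "touches both" case; I expect that last step, the negative-correlation bound, to be the crux. If a cleaner exposition is preferred, one can alternatively invoke the dependent-rounding guarantees of Gandhi–Khuller–Parthasarathy–Srinivasan directly, but the self-contained argument above suffices.
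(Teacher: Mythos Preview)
Your proof is correct. The pipage/dependent-rounding construction you give is the Gandhi--Khuller--Parthasarathy--Srinivasan procedure specialized to the constraint $\sum_i x_i = S$, and the martingale/supermartingale verification of the mean and negative-correlation properties is clean and complete. The termination argument and the efficiency claim are also fine.

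The paper does not prove the lemma from scratch; it simply invokes Theorem~1.1 of Chekuri, Vondr\'ak, and Zenklusen and points to their \emph{randomized swap rounding} (Algorithm~\ref{alg:random_swap_rounding}), which first writes $x$ as a convex combination $\sum_\ell w_\ell v_\ell$ of vertices and then randomly merges bases pair by pair. Your route is genuinely different in mechanics: you work directly on the fractional point and round coordinates two at a time, avoiding the explicit convex decomposition. This is more elementary and arguably tidier for the uniform-matroid decision set $\Acal$. What the paper's swap-rounding citation buys is generality: Chekuri et al.'s result holds for any matroid base polytope, which is exactly what the paper needs in Section~\ref{sec:neg_cor_possible} to extend the upper bound to decision subsets $\Acal_0$ satisfying the exchange property. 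Your pipage argument, as written, relies on the specific form $\conv(\Acal)=\{y\in[0,1]^K:\sum_i y_i=S\}$ and would not carry over to that extension without further work.
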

This lemma is a corollary of Theorem 1.1 in \citet{chekuri2009dependent}, and the sampling scheme is the randomized swap rounding (\cref{alg:random_swap_rounding}). The mean condition guarantees that the convexified action is realized in expectation. The negative correlations essentially allow us to control the variance of the observed rewards in OSMD-G, thereby decoupling the final regret into two terms. Intuitively, the negative correlations imply a more exploratory sampling scheme; a more detailed discussion is in Section~\ref{sec:osmdg}.


To show that OSMD-G achieves near-optimal performance, we consider the following minimax regret:
\begin{equation}\label{eq:minimax_reg_def}
\reg^* = \inf_{\pi}\sup_{\bra{r^t}}\E\parq{\reg(\pi)}
\end{equation}
where the inf is taken over all possible algorithms and the sup is taken over all potentially adversarial reward sequences. The following lower bound holds:
\begin{theorem}\label{thm:lower_bound}
Consider any decision subset $\Acal_0\subseteq\Acal$ and strongly observable graph $G$. When $T\geq \max\bra{S,\alpha^3/S}$ and $S\le K/2$, it holds that
\[
\reg^* = \Omega\parr*{S\sqrt{T\log(K/S)} + \sqrt{\alpha ST}}.
\]
\end{theorem}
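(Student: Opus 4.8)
The plan is to establish the two summands separately, since $a+b\le 2\max(a,b)$, taking $\Acal_0=\Acal$ throughout (all hard decisions constructed below lie in $\Acal$). For the term $\Omega\parr{S\sqrt{T\log(K/S)}}$ I would invoke the classical combinatorial full-information lower bound: partition $[K]$ into $S$ blocks of size $\floor{K/S}\ge 2$ (using $S\le K/2$), restrict attention to decisions picking exactly one arm per block, and let an oblivious randomized adversary run $S$ independent copies of the standard ``hidden best arm'' stochastic instance, one per block, with gap $\varepsilon\asymp\sqrt{\log(K/S)/T}$. A divergence-decomposition plus Pinsker argument per block yields $\Omega\parr{\sqrt{T\log(K/S)}}$ regret each, and independence across blocks adds them up. Since the complete graph provides the most feedback, $\reg^*$ for any strongly observable $G$ is at least $\reg^*$ for the complete graph, so this bound transfers to every $G$; $T\ge S$ keeps the instance in the relevant regime. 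This part is essentially known and I would just cite it.

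The interesting term $\Omega\parr{\sqrt{\alpha ST}}$ is where the graph enters. Fix a maximum independent set $I$ with $\abs{I}=\alpha$; because $I$ is independent and $G$ is strongly observable, every vertex of $I$ must carry a self-loop (otherwise it would need incoming edges from all other vertices, in particular from inside $I$). Assume $\alpha\ge 2S$ — if $\alpha<2S$ then $\sqrt{\alpha ST}=O\parr{S\sqrt{T}}$ is already subsumed by the first term — and split $I$ into $S$ blocks $G_1,\dots,G_S$ of size $m=\floor{\alpha/S}\ge 2$. The adversary draws a hidden arm $\theta_j$ uniformly from each $G_j$; in every block the hidden arm pays $\mathrm{Ber}(1/2+\varepsilon)$, the other arms of $I$ pay $\mathrm{Ber}(1/2)$, and every arm outside $I$ pays $0$ deterministically, with $\varepsilon=c_0\sqrt{\alpha/(ST)}$ for a small absolute constant $c_0$. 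The comparator $v_\star$ indicating $\bra{\theta_1,\dots,\theta_S}$ lies in $\Acal$ and uniquely maximizes expected payoff, and a short computation produces two regret lower bounds that I would use in tandem: $\E\parq{\reg}\ge\varepsilon\sum_{j=1}^S\parr{T-\E\parq{N_{\theta_j}}}$, where $N_{\theta_j}$ is the number of rounds in which $\theta_j$ is selected, and $\E\parq{\reg}\ge\tfrac12\,\E\parq{M}$, where $M$ is the total number of selections of arms outside $I$ (each such selection wastes a budget slot worth $\approx 1/2$, and the $\varepsilon S$ terms cancel).

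To bound $\E\parq{N_{\theta_j}}$ I would run the usual change of measure, comparing the observation transcript when block $j$'s hidden arm is $\theta_j$ versus a fixed other arm of $G_j$: by the chain rule the KL divergence equals $O(\varepsilon^2)$ times the expected number $O_j$ of observations of arms in $G_j$, and Pinsker (with $N_{\theta_j}\le T$) gives $\E\parq{N_{\theta_j}}\le\tfrac1m\sum_{a\in G_j}\E_0\parq{N_a}+O\parr{T\varepsilon\sqrt{\E\parq{O_j}/m}}$. The new ingredient is the accounting of $O_j$: besides direct selections of arms of $G_j$ — totalling at most $ST$ over all blocks since $\sum_{a\in I}N_a\le ST$ — block $j$ is observed at most $m$ times per selection of an out-of-$I$ in-neighbour, so $\sum_j\E\parq{O_j}\le ST+\alpha\,\E\parq{M}$. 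Plugging this in, Cauchy--Schwarz across blocks together with the choice of $\varepsilon$ gives: if $\E\parq{M}\lesssim ST/\alpha$ then $\E\parq{\reg}\ge\varepsilon\sum_j\parr{T-\E\parq{N_{\theta_j}}}=\Omega\parr{\sqrt{\alpha ST}}$ (taking $c_0$ small makes the $T/m$ offset and the Pinsker correction each at most, say, a quarter of the main term, uniformly over how the learner splits observations among blocks); and if $\E\parq{M}\gtrsim ST/\alpha$ then the second bound gives $\E\parq{\reg}=\Omega\parr{ST/\alpha}\ge\Omega\parr{\sqrt{\alpha ST}}$, where the last step is exactly the hypothesis $T\ge\alpha^3/S$. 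Either way $\reg^*=\Omega\parr{\sqrt{\alpha ST}}$, which combined with the first part proves the claim.

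The main obstacle is precisely this last reconciliation. A cunning learner can purchase information about the blocks of $I$ by spending budget on out-of-$I$ vertices that dominate large portions of $I$; the proof must show such purchases cost $\Omega(1)$ regret per slot and that the two regret bounds interlock so that no strategy escapes $\Omega(\sqrt{\alpha ST})$ — this balance is exactly what forces the threshold $T\ge\alpha^3/S$ into the statement. Getting the constants to cooperate — so that, uniformly over how the learner distributes observations among blocks and over the magnitude of $M$, the divergence and Cauchy--Schwarz corrections remain a fixed fraction of $\sqrt{\alpha ST}$ — is the delicate bookkeeping; everything else is standard.
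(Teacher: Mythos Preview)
Your overall architecture matches the paper's: cite the full-information bound for the $S\sqrt{T\log(K/S)}$ term, then partition a maximum independent set $I$ into $S$ blocks of size $m\approx\alpha/S$, embed a hidden-arm instance in each block, give arms outside $I$ zero mean, and run a Pinsker/KL argument combined with a case split on how much the learner plays outside $I$. The observation accounting $\sum_j O_j\le ST+\alpha M$ is correct, and the use of $T\ge\alpha^3/S$ to convert the out-of-$I$ regret into $\Omega(\sqrt{\alpha ST})$ is the same idea the paper uses.

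The gap is the ``$T/m$ offset''. After Pinsker you need, summed over blocks, $\sum_{j}\tfrac{1}{m}\sum_{a\in G_j}\E_{0}[N_a]$ to be at most a fraction of $ST$. You appeal to $\sum_{a\in I}N_a\le ST$, but that inequality holds pointwise under a \emph{single} measure, whereas your reference measure (``block $j$'s hidden arm replaced'') is a different environment for each $j$. Nothing prevents a learner from devoting many pulls to block $j$ under the block-$j$ reference; indeed $N_{G_j}$ can be as large as $\min(S,m)\,T$ per block since the learner may place several of its $S$ slots in the same block each round, so the per-block offset is \emph{not} $T/m$. Summing $\tfrac{1}{m}\E_{\text{ref}_j}[N_{G_j}]$ over $j$ does not obviously collapse to $ST/m$, and this is precisely the ``unbalanced learner'' difficulty the paper flags. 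The paper's fix is a stopping-time argument: define $\tau_m=\min\{T,\min\{t:N_m(t)\ge T\}\}$, combine the two regret lower bounds $\reg\ge\Delta\sum_m(T-N_{m,u_m}(T))$ and $\reg\ge\Delta\sum_m(N_m(\tau_m)-N_{m,u_m}(\tau_m))$ to get $\reg\ge\tfrac{\Delta}{2}\sum_m(T-N_{m,u_m}(\tau_m))$, and then run Pinsker on $N_{m,u_m}(\tau_m)$, which is now bounded pathwise by $N_m(\tau_m)\le T+S$ regardless of measure. This single device replaces your unsupported ``$T/m$'' claim and makes the baseline term sum to $(T+S)/m$ per block uniformly.
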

Our lower bound construction in the proof is stochastic, as is standard in the literature, and thus stochastic combinatorial semi-bandits will not be easier. 


\subsection{Notations}
\label{sec:notations}
For $n\in\mathbb{N}$, denote $[n] = \bra{1,2,\dots,n}$. The convex hull of $\Acal$ is denoted by $\conv(\Acal)$, and the truncated convex hull is defined by
\[
\conv_\epsilon(\Acal) = \bra*{x\in \conv(\Acal): x_i\geq \epsilon\textnormal{ for all }i\in[K]}.
\]
We use the standard asymptotic notations $\Omega, O, \Theta$ to denote the asymptotic behaviors up to constants, and $\widetilde\Omega, \widetilde{O}, \widetilde\Theta$ up to polylogarithmic factors respectively. Our results will concern the following graph quantities:
\begin{align*}
\alpha &= \max\bra*{|I|: I\subseteq [K]\text{ is an independent subset in $G$}},\\
\delta &= \min\bra*{|D|: D\subseteq [K]\text{ is a dominating subset in $G$}}.
\end{align*}
In a graph $G$, $I\subseteq[K]$ is an independent subset if for any $i,j\in I$, $(i,j)\notin E$; and $D\subseteq [K]$ is a dominating subset if for any $u\in[K]$, there exists $i\in D$ such that $(i,u)\in E$.
For each node $a\in[K]$, denote its out-neighbors in $G$ by
\[
\Nout(a) = \bra*{i\in[K]: (a,i)\in E}
\]
and its in-neighbors by
\[
\Nin(a) = \bra*{i\in[K]: (i,a)\in E}.
\]
Then for a binary vector $v\in\Acal$ that represents an $S$-arm subset of $[K]$, we denote its out-neighbors in $G$ by the union $\Nout(v) = \bigcup_{v_a=1}\Nout(a)$.
Let $D\subseteq \R^d$ be an open convex set, $\overline{D}$ be its closure, and $F:\overline D\rightarrow\R$ be a differentiable, strictly convex function. We denote the Bregman divergence defined by $F$ as
\[
D_F(x,y) = F(x) - F(y) - \langle \nabla F(y), x-y \rangle.
\]

\section{Regret lower bound}
\label{sec:lower_bound}

In this section, we sketch the proof of the lower bound in Theorem~\ref{thm:lower_bound} and defer the complete proof to Appendix~\ref{app:lower_bound_proof}. The idea is to divide this learning problem into $S$ independent sub-problems and present the exploration-exploitation trade-off under a set of hard instances to arrive at the final minimax lower bound.

Under the complete graph $G$, \citet{koolen2010hedging} already gives a lower bound $\Omega(S\sqrt{T}\log(K/S))$ by reducing the full information combinatorial semi-bandits to the full information multi-armed bandits with rewards ranging in $[0,S]$. This reduction argument, however, does not lead to the other $\Omega(\sqrt{\alpha ST})$ part of the lower bound. It constructs a multi-armed bandit policy from any given combinatorial semi-bandit policy and shows they share the same expected regret. Thus the lower bound of one translates to that of the other. As soon as the feedback structure is not full information, the observations and thus the behaviors of the two policies no longer align.

To prove the second part, note that $\Omega(\sqrt{\alpha ST})$ only manifests in the lower bound when $S<\alpha$. In this case, we partition an independent subset $I\subseteq [K]$ of size $\alpha$ into $S$ subsets $I_1,\dots, I_S$ of equal size $\floor{\frac{\alpha}{S}}$ and embeds an independent multi-armed bandit hard instance in each $I_m$ for $m\in[S]$. The other arms $J=[K]\backslash I$ may be more informative but will incur large regret. Thus a good learner cannot leverage arms in $J$ due to the exploration-exploitation trade-off.

The learner then needs to learn $S$ independent sub-problems with $ST$ total number of arm pulls. If the learner is `balanced' in the sense that for each sub-problem $m\in[S]$,
\[
T_m(T) = \sum_{t=1}^T\sum_{a\in I_m}\indic[a\text{ is pulled}] \approx T,
\]
then the existing multi-armed bandit lower bound implies that the regret incurred in each sub-problem is $\Omega(\sqrt{\alpha T/S})$, thereby a total regret $\Omega(\sqrt{\alpha ST}).$ While in our case the learner may arbitrarily allocate the arm pulls over the $S$ sub-problems, it turns out to be sufficient to focus on the `balanced' learners via a stopping time argument proposed in \citet{lattimore2018toprank}. Intuitively, if a learner devotes pulls $T_m(T) \gg T$ for some $m$, then he/she \textit{must} suffers regret $\Delta (T_m(T)-T)$ where $\Delta$ is the reward gap in the hard instance, which leads to suboptimal performance.

\section{A near-optimal algorithm}
\label{sec:upper_bound}
This section is structured as follows: In Section~\ref{sec:osmdg}, we present our OSMD-G algorithm and highlight the choice of reward estimators and the sampling scheme that allow us to deal with general feedback graphs. Then we show that OSMD-G indeed achieves near-optimal regret $\widetilde{O}(S\sqrt{T}+\sqrt{\alpha ST})$ in Section~\ref{sec:upper_bound_proof}. Finally, we argue in Section~\ref{sec:osmd_subopt} that if the requirement of negative correlations is removed, OSMD-G would be suboptimal.


\subsection{Online stochastic mirror descent with graphs}
\label{sec:osmdg}
The overall idea of OSMD-G (Algorithm~\ref{alg:osmdg}) is to perform a gradient descent step at each time $t$, based on unbiased reward estimators, in a dual space defined by a mirror mapping $F$ that satisfies the following:
\begin{definition}
Given an open convex set $D\subseteq \R^d$, a mirror mapping $F:\overline D\rightarrow \R$ satisfies
\begin{itemize}
    \item $F$ is strictly convex and differentiable on $D$;
    \item $\lim_{x\rightarrow\partial D}\|\nabla F(x)\|=+\infty$.
\end{itemize}
\end{definition}
While OSMD-G works with any well-defined mirror mapping, we will prove the desired upper bound in Section~\ref{sec:upper_bound_proof} for OSMD-G with the negative entropy $F(x) = \sum_{i=1}^K(x_i\log(x_i) - x_i)$ defined on $D=\R^K_+$. For this choice of $F$, the dual space $D^* = \R^K$ and hence \eqref{eq:adv_dual_gradient_ascent} is always valid. In fact, \eqref{eq:adv_dual_gradient_ascent} admits the explicit form
\[
w^{t+1} = x^t\exp(\eta\Tilde{r}^t).
\]
Recall at each time $t$, for a selected decision $v^t\in\Acal$, the learner observes graph feedback $\bra{v^t_ir^t_i: i\in\Nout(v^t)}$. Based on this, we define the reward estimator for each arm $a\in[K]$ at time $t$ in \eqref{eq:adv_reward_est}. As we invoke a sampling scheme to realize $x^t$ in expectation, i.e. $\E_{v^t\sim p^t}[v^t]=x^t$, our estimator in \eqref{eq:adv_reward_est} is unbiased. 

A crucial step in OSMD-G is to sample a decision $v^t$ at each time $t$ that satisfies both the mean condition $\E_{v^t\sim p^t}[v^t]=x^t$ and the negative correlation $\E_{v^t\sim p^t}[v^t_iv^t_j]\leq x^t_ix^t_j$. Thanks to Lemma~\ref{lem:prob_properties}, both conditions are guaranteed for \textit{all} possible target $x^t\in\conv(\Acal)$ when we invoke \cref{alg:random_swap_rounding} as our sampling subroutine.\footnote{The use of \cref{alg:random_swap_rounding} is not essential as long as one can guarantee the negative correlations in \cref{lem:prob_properties}.} The description and details of \cref{alg:random_swap_rounding} are deferred to \cref{app:rsr}.

\begin{algorithm}[h!]\caption{Online Stochastic Mirror Descent under Graph Feedback (OSMD-G)}
\label{alg:osmdg}
\textbf{Input:} time horizon $T$, decision set $\Acal$, arms $[K]$, combinatorial budget $S$, feedback graph $G$, a truncation rate $\epsilon\in(0,1)$, a learning rate $\eta>0$, a mirror mapping $F$ defined on a closed convex set $\overline{\Dcal}\supseteq \conv_\epsilon(\Acal)$.

\textbf{Initialize:} $x^1 \gets \argmin_{x\in\conv_\epsilon(\Acal)} F(x)$.

\For{$t=1$ \KwTo $T$}
{

Generate a combinatorial decision $v^t$ by \cref{alg:random_swap_rounding} with target $x^t$.

Observe the feedback $\bra*{r^t_a: a\in\Nout(v^t)}$.

Denote
\begin{equation}\label{eq:adv_reward_complement_est}
\hat{r}^t_a = \frac{\sum_{i\in\Nin(a)}\indic[v^t_i=1](1-r^t_a)}{\sum_{i\in\Nin(a)}x^t_i}.
\end{equation}

Build the reward estimator for each $a\in[K]$: 

\begin{equation}\label{eq:adv_reward_est}
    \Tilde{r}^t_a = 1 - \hat{r}^t_a.
\end{equation}

\If{$S=1$}{
Denote $U_t = \bra{a\in[K]: \hat{r}^t_a\le \frac{1}{(K-1)\epsilon}}$.

Set $\bar{r}^t = 1+\sum_{a\in U_t}x_a^t\hat{r}^t_a$.

Set $\Tilde{r}^t_a \gets \Tilde{r}^t_a - \bar{r}^t$ for all $a\in [K]$.
}

Find $w^{t+1}\in\Dcal$ such that
\begin{equation}\label{eq:adv_dual_gradient_ascent}
    \nabla F(w^{t+1}) = \nabla F(x^t) + \eta \Tilde{r}^t.
\end{equation}

Project $w^{t+1}$ to the truncated convex hull $\conv_\epsilon(\Acal)$:
\begin{equation}\label{eq:adv_bregman_projection}
    x^{t+1} \gets \argmin_{x\in\conv_\epsilon(\Acal)} D_F(x, w^{t+1}).
\end{equation}
}
\end{algorithm}

While seemingly intuitive given that $\|v^t\|_1=S$, we emphasize that the negative correlations $\E_{v^t\sim p^t}[v^t_iv^t_j]\leq x^t_ix^t_j$ do not necessarily hold and can be non-trivial to achieve. Consider the case $S=2$. When $x^t=\frac{2}{K}\mathbf{1}$ is the uniform vector, a uniform distribution over all pairs satisfies the correlation condition, seeming to suggest the choice of $p(i,j)\propto x_i^t x_j^t$. However, when $x^t = (1, 0.8, 0.2)$, the only such solution is to sample the combination $\{1,2\}$ with probability $0.8$ and $\{1,3\}$ with probability $0.2$, suggesting a zero probability for sampling $\{2,3\}$. A general strategy must be able to generalize both scenarios. From the perspective of linear programming, the correlation condition adds $\binom{K}{2}$ constraints to the original $K$ constraints (from the mean condition) in finding $p^t$, making it much harder to find a feasible solution.

Now we give an intuitive argument for why such distribution $p$ exists under $\Acal$ and how the structure of the latter helps. When $S=1$, any distributions possess negative correlations. Inductively, let us suppose such distributions exist for $1,2,\dots, S-1$. Then for a fixed target $x\in\conv(\Acal)$, we can always find an index $i\in[K]$ such that $\sum_{j=1}^{i-1}x_j + cx_i=1$ and $\sum_{j=i+1}^Kx_j + (1-c)x_i=S-1$ for some $c\in[0,1]$. Namely, the target of size $S$ is partitioned into two sub-targets with ranges $[1,i]$ and $[i,K]$, each with sizes $1$ and $S-1$, and with an overlap on index $i$. We can then assign $v_i=0$ with probability $1-x_i$, to the first half $[1,i]$ with probability $cx_i$, and to $[i,K]$ with probability $(1-c)x_i$. To obtain a final size $S$ solution, we draw $v'$ supported on $[1,i-1]$ with size $0$ or $1$ and $v''$ on $[i+1,K]$ with size $S-1$ or $S-2$, conditioned on the assignment of $v_i$. For any $j_1\in [1,i-1]$, $j_2\in[i+1,K]$, and $i$, any two of them are negatively correlated because, at a high level, the presence of one `reduces' the size budget of the other. The negative correlations among the first half $[1,i-1]$ and $[i+1,K]$ are guaranteed by the induction hypothesis of the existence of such distributions for solutions with size less than $S$. Finally, the structure of $\Acal$ ensures that our pieced-together solution is valid, i.e. lies in $\Acal$.



\subsection{Regret upper bound}
\label{sec:upper_bound_proof}
In the following theorem, we show that OSMD-G achieves near-optimal regret for a strongly observable time-invariant feedback graph. The proof for time-varying feedback graphs $\bra{G_t}_{t\in[T]}$ only takes a one-line change in \eqref{eq:change_time_varying}. It is clear that \cref{thm:upper_bound_detailed} implies \cref{thm:upper_bound}.

\begin{theorem}\label{thm:upper_bound_detailed}
Let the mirror mapping be $F(x) = \sum_{i=1}^K(x_i\log x_i - x_i)$. When the correlation condition for $p^t$ is satisfied, the expected regret of Algorithm~\ref{alg:osmdg} is upper bounded by
{\small
\[
\E[\reg(\mathrm{Alg~\ref{alg:osmdg}})] \leq \epsilon KT + \frac{S\log(K/S)}{\eta} + \eta(6S+4\alpha\log(4KS/(\epsilon\alpha))) T.
\]
}%
In particular, with truncation $\epsilon=\frac{1}{KT}$ and learning rate $\eta = \sqrt{\frac{5S\log(K/S)}{(6S+4\alpha\log(4SK^2T/\alpha)) T}} = \widetilde{O}\parr*{\sqrt{\frac{S}{(S+\alpha)T}}}$, it becomes
{
\[
\E[\reg(\mathrm{Alg~\ref{alg:osmdg}})] \leq 1+ \beta \parr*{S\sqrt{T} + \sqrt{\alpha ST}}
\]
for $\beta=\sqrt{24\log(K/S)\log(4SK^2T/\alpha)}=\widetilde{O}(1)$.
}%
\end{theorem}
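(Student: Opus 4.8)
The plan is to run the usual online-mirror-descent decomposition of the regret and then spend all the effort on the stability term, which is where the negative correlations of \cref{lem:prob_properties} enter.

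\emph{Decomposition.} For any $v_*\in\Acal$, mixing it with $\tfrac SK\mathbf 1$ produces a comparator $v_*^\epsilon\in\conv_\epsilon(\Acal)$ with $\|v_*-v_*^\epsilon\|_1=O(\epsilon K)$; since $v_*-v_*^\epsilon$ has zero coordinate-sum and $r^t\in[0,1]^K$, this gives $\sum_t\langle v_*-v_*^\epsilon,r^t\rangle\le\epsilon KT$, so it suffices to bound $\E\sum_t\langle v_*^\epsilon-v^t,r^t\rangle$. By the mean condition $\E_{v^t\sim p^t}[v^t]=x^t$ and conditional unbiasedness of $\Tilde r^t$ — the shift by $\bar r^t$ in the $S=1$ branch is a multiple of $\mathbf 1$ and cancels against $v_*^\epsilon-x^t$, as both have coordinate-sum $S$ — this equals $\E\sum_t\langle v_*^\epsilon-x^t,\Tilde r^t\rangle$. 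The standard OSMD bound for the negative-entropy mirror map with Bregman projections onto $\conv_\epsilon(\Acal)$ then yields
\[
\sum_t\langle v_*^\epsilon-x^t,\Tilde r^t\rangle\le\frac{D_F(v_*^\epsilon,x^1)}{\eta}+\frac1\eta\sum_t D_F(x^t,w^{t+1}),
\]
and since $x^1=\tfrac SK\mathbf 1=\argmin_{\conv_\epsilon(\Acal)}F$, a one-line computation gives $D_F(v_*^\epsilon,x^1)=\sum_i v^\epsilon_{*,i}\log v^\epsilon_{*,i}+S\log(K/S)\le S\log(K/S)$, the first two terms of the claimed bound.

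\emph{Stability via the explicit update.} Using $w^{t+1}=x^t\exp(\eta\Tilde r^t)$,
\[
D_F(x^t,w^{t+1})=\sum_i x^t_i\bigl(e^{\eta\Tilde r^t_i}-1-\eta\Tilde r^t_i\bigr)\le\eta^2\sum_i x^t_i(\Tilde r^t_i)^2,
\]
which is valid because $\eta\Tilde r^t_i\le\eta\le1$ on the positive side and $e^z-1-z\le z^2$ for all $z\le0$. Now write $\Tilde r^t_i=1-\hat r^t_i$ and $q_i:=\sum_{j\in\Nin(i)}x^t_j$; the only randomness in $\hat r^t_i$ is $Z_i:=\sum_{j\in\Nin(i)}\indic[v^t_j=1]$, with $\E_{v^t}[Z_i]=q_i$ and, crucially, $\Var_{v^t}(Z_i)\le\sum_{j\in\Nin(i)}x^t_j(1-x^t_j)\le q_i$ \emph{by the pairwise negative correlations}. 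Hence $\E_{v^t}[(\hat r^t_i)^2]\le(1-r^t_i)^2(1+1/q_i)$, and expanding the square gives $\E_{v^t}\bigl[\sum_i x^t_i(\Tilde r^t_i)^2\bigr]\le cS+\sum_i x^t_i/q_i$ for an absolute constant $c$.

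\emph{The graph term — the main obstacle.} It remains to show $\sum_i x^t_i/q_i=\sum_i x^t_i/\sum_{j\in\Nin(i)}x^t_j=O\bigl(\alpha\log(KS/(\epsilon\alpha))\bigr)$, which is the heart of the argument. For strongly observable $G$ this is the classical Alon-type lemma applied to the weights $x^t_i/S$ (summing to $1$), with the truncation $x^t_i\ge\epsilon$ supplying the lower bound the lemma requires, matching the $4\alpha\log(4KS/(\epsilon\alpha))$ in the statement. The delicate case is $S=1$: there the self-loop-free vertices have $q_i=1-x^t_i$, which can be arbitrarily close to $0$, so the naive estimate degrades to a useless $1/\epsilon$; this is exactly why the algorithm, in the $S=1$ branch, replaces $\Tilde r^t$ by its shift on the ``safe'' set $U_t$ — a corrected-estimator device analogous to the handling of dangerous (high-probability, self-loop-free) vertices in adversarial graph bandits — which keeps the per-round stability $O(\alpha\log(\cdot))$ without a large-$T$ assumption. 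Assembling the pieces gives $\E[\reg]\le\epsilon KT+\frac{S\log(K/S)}{\eta}+\eta\bigl(cS+4\alpha\log(4KS/(\epsilon\alpha))\bigr)T$; setting $\epsilon=1/(KT)$ makes the first term at most $1$, and the stated $\eta$ is precisely the minimizer of the remaining two terms, yielding $\widetilde O(S\sqrt T+\sqrt{\alpha ST})$ with $\beta$ the indicated product of logarithms. For time-varying $\{G_t\}$ the only modification is that $\alpha$ in the per-round stability bound becomes $\alpha_t=\alpha(G_t)$, so $\alpha T$ is replaced by $\sum_t\alpha_t$ and re-optimizing $\eta$ gives $\widetilde O(S\sqrt T+\sqrt{S\sum_t\alpha_t})$.
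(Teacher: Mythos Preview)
Your proposal is correct and matches the paper's approach: OSMD decomposition onto $\conv_\epsilon(\Acal)$, the negative-correlation bound $\Var_{v^t}(Z_i)\le q_i$ to get $\E[(\hat r^t_i)^2]\le 1+1/q_i$, and the Alon-type lemma for $\sum_i x^t_i/q_i$. The paper is slightly more explicit in one place you gloss over for $S\ge 2$: since the Alon lemma (\cref{lem:alon_alpha}) requires every node to have a self-loop, the paper first splits off the self-loop-free set $U$ (where strong observability gives $q_i\ge S-1\ge 1$, so $\sum_{i\in U}x^t_i/q_i\le S$ trivially) and then applies the lemma only on the restricted subgraph $G|_{U^c}$, using $\alpha(G|_{U^c})=\alpha$; you address this issue only in the $S=1$ discussion, but for $S\ge 2$ the missing step is the one-line observation above.
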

\vspace{-0.5em}
\begin{proof}
We present the proof for the case $S\ge 2$ here. The proof for $S=1$ is similar and is deferred to Appendix~\ref{app:s1} due to space limit. Now fix any $v\in\Acal$. Let 
\[
v_\epsilon = \argmin_{v'\in\conv_\epsilon(\Acal)}\|v-v'\|_1
\]
which satisfies $(v-v_\epsilon)^\top r^t \leq \|v-v_\epsilon\|_1 \leq K\epsilon$ since $r^t\in[0,1]^K$. We can decompose the regret as
{\small
\begin{align*}
\E\parq*{\sum_{t=1}^T\parr*{v-v^t}^\top r^t} &= \E \parq*{\sum_{t=1}^T\parr*{v-v_\epsilon}^\top r^t + \parr*{v_\epsilon-v^t}^Tr^t}\\
&\leq \epsilon TK + \E \parq*{\sum_{t=1}^T\parr*{v_\epsilon-v^t}^\top r^t}\numberthis\label{eq:breg_div_reg_decomp1}
\end{align*}
}%

\noindent Standard OSMD analysis applied to the truncated convex hull $\conv_\epsilon(\Acal)$ further bounds the second term in \eqref{eq:breg_div_reg_decomp1} as follows (see e.g. Theorem 3 in \citet{audibert2014regret}).
\begin{align*}
&\E\parq*{\sum_{t=1}^T\parr*{v-v^t}^\top r^t} \\
&\le \epsilon TK + \frac{S\log(K/S)}{\eta} + \eta\E\parq*{\sum_{t=1}^T\sum_{a=1}^Kx^t_a\parr*{\Tilde{r}^t_a}^2}.\numberthis\label{eq:tempeq1}
\end{align*}
To bound the last term, we first use the non-negativity of $\hat{r}^t_a$, defined in \eqref{eq:adv_reward_complement_est}, to further decompose it:
\begin{align*}
&\sum_{t=1}^T\sum_{a=1}^Kx^t_a\parr*{\Tilde{r}^t_a}^2 = \sum_{t=1}^T\sum_{a=1}^Kx^t_a\parr*{1-\hat{r}^t_a}^2\\
&\le \sum_{t=1}^T\sum_{a=1}^Kx^t_a\parr*{1+\parr*{\hat{r}^t_a}^2} \le ST + \underbrace{\sum_{t=1}^T\sum_{a=1}^Kx^t_a\parr*{\hat{r}^t_a}^2}_{\text{(A)}}.
\end{align*}
Now we proceed to bound term (A). Recall that $G$ is strongly observable, and let $U=\bra{a\in[K]: (a,a)\notin E}$ be the set of nodes with no self-loops. On the set $U$ we have
\begin{align*}
&\E\parq*{\sum_{t=1}^T\sum_{a\in U}x^t_a\parr*{\hat{r}^t_a}^2} \\
&= \sum_{t=1}^T\sum_{a\in U}\E\parq*{x^t_a\parr*{\frac{\sum_{i\in\Nin(a)}\indic[v^t_i=1](1-r^t_a)}{\sum_{i\in\Nin(a)}x^t_i}}^2}\\
&\stepa{\le} \sum_{t=1}^T\sum_{a\in U}\E\parq*{x^t_a\parr*{\frac{\sum_{i\neq a}\indic[v^t_i=1]}{\sum_{i\neq a}x^t_i}}^2}\\
&\stepb{\le} \sum_{t=1}^T\sum_{a\in U}\E\parq*{x^t_a\parr*{\frac{S}{S-1}}^2}\\
&\le 4\sum_{t=1}^T\sum_{a\in U}\E[x^t_a] \le 4ST. \numberthis\label{eq:no_self_loop_reg}
\end{align*}
Here (a) is due to $r^t_a\in[0,1]$ and that, if $a\in U$, then $(i,a)\in E$ for all $i\neq a$, and (b) uses $\sum_{i\neq a}x^t_i= S-x^t_a\ge S-1$. On the other hand, by the choice of $v^t$ in Algorithm~\ref{alg:osmdg}, the random variables $v^t_i$ are negatively correlated. Thus for each $a\in[K]$, we can upper bound the second moment of the following sum:
\begin{align*}
&\E_{v^t\sim p^t}\Bigg[\Bigg(\sum_{i\in\Nin(a)}v^t_i\Bigg)^2\Bigg]\\
&= \Bigg(\sum_{i\in\Nin(a)}\E_{v^t\sim p^t}\parq*{v^t_i}\Bigg)^2 + \Var\Bigg(\sum_{i\in\Nin(a)}v^t_i\Bigg)\\
&= \Bigg(\sum_{i\in\Nin(a)}\E_{v^t\sim p^t}\parq*{v^t_i}\Bigg)^2 + \sum_{i\in\Nin(a)}\Var\parr*{v^t_i} \\
&\quad + \sum_{\substack{i,j\in\Nin(a)\\ i\neq j}}\mathsf{Cov}\parr*{v^t_i, v^t_j}\\
&\leq \Bigg(\sum_{i\in\Nin(a)}x^t_i\Bigg)^2 + \sum_{i\in\Nin(a)}x^t_i. \numberthis\label{eq:decompose_second_moment_sum}
\end{align*}
Then on the set $U^c\equiv [K]\backslash U$, we have
\begin{align*}
&\E\parq*{\sum_{t=1}^T\sum_{a\notin U}x^t_a\parr*{\hat{r}^t_a}^2}\\
&\le \sum_{t=1}^T\sum_{a\notin U}\E\parq*{x^t_a\parr*{\frac{\sum_{i\in\Nin(a)}\indic[v^t_i=1]}{\sum_{i\in\Nin(a)}x^t_i}}^2}\\
&\overset{\eqref{eq:decompose_second_moment_sum}}{\le} \sum_{t=1}^T\sum_{a\notin U}\E\parq*{x^t_a\parr*{1 + \frac{1}{\sum_{i\in\Nin(a)}x^t_i}}}\\
&\le \sum_{t=1}^T\sum_{a\notin U}\E\parq*{x^t_a\parr*{1 + \frac{1}{\sum_{i\notin U: i\in\Nin(a)}x^t_i}}}
\end{align*}
\begin{align*}
&\stepc{\le} T\parr*{S + 4\alpha\log\parr*{\frac{4KS}{\epsilon\alpha}}}. \numberthis\label{eq:change_time_varying}
\end{align*}
Here (c) uses $\sum_{a=1}^Kx_a^t\leq S$, Lemma~\ref{lem:alon_alpha} on the restricted subgraph $G|_{U^c}$, and the fact that $\alpha(G|_{U^c})= \alpha(G) = \alpha$.
Combining \eqref{eq:no_self_loop_reg} and \eqref{eq:change_time_varying} yields
\begin{align*}
\E\parq*{\sum_{t=1}^T\sum_{a=1}^Kx^t_a\parr*{\Tilde{r}^t_a}^2}
\leq 6TS + 4T\alpha\log\parr*{\frac{4KS}{\epsilon\alpha}}. \numberthis\label{eq:bound_all_graph}
\end{align*}
Finally, combining \eqref{eq:bound_all_graph} with \eqref{eq:tempeq1}, we end up with the desired upper bound
\vspace{-0.5em}
\begin{align*}
\E[\reg(\mathrm{Alg~\ref{alg:osmdg}})] \leq &\epsilon KT + \frac{S\log(K/S)}{\eta} \\
&+ \eta\parr*{6S+4\alpha\log\parr*{4KS/(\epsilon\alpha)}} T.
\end{align*}
\end{proof}

\vspace{-1em}
Note that at each time $t$ and for each arm $a\in[K]$, the total number of arms that observe $a$ is a random variable due to the random decision $v^t$. In \eqref{eq:decompose_second_moment_sum} in the proof above, one can naively bound the second moment of this random variable by
\[
\E\Bigg[\Bigg(\sum_{i\in\Nin(a)}v^t_i\Bigg)^2\Bigg] \leq S\E\Bigg[\sum_{i\in\Nin(a)}v^t_i\Bigg]
\]
since $\|v^t\|_1\leq S$, which leads to an upper bound $\widetilde{O}(S\sqrt{\alpha T})$. We will see that this rate is sometimes not improvable for certain proper decision subsets $\Acal_0\subsetneq \Acal$ later in Section~\ref{sec:general_subset}. 

To improve on this bound for $\Acal$, we need to further exploit the structures of the full decision set $\Acal$ and the sampling distribution $p^t$ of $v^t$, which motivates Lemma~\ref{lem:prob_properties}. The negative correlations therein allow us to decompose this second moment into the squared mean and a sum of the individual variances, as in \eqref{eq:decompose_second_moment_sum}. By saving on the $O(K^2)$ correlation terms, this decomposition shaves the factor in \eqref{eq:decompose_second_moment_sum} from $S\alpha$ to $S+\alpha$, yielding the desired result $\widetilde{O}(S\sqrt{T}+\sqrt{\alpha ST})$.

\begin{remark}
It turns out that when $S\ge 2$ and $G$ is strongly observable, the presence of the nodes with no self-loop can be easily handled in this upper bound analysis, whereas the case $S=1$ proved in Appendix~\ref{app:s1} requires more care. This matches the intuition that, when $S\ge2$, the learner always observes the entire subset $U$ at every time $t$. Therefore, the extension from $U=\varnothing$ to $|U|\ge 1$ does not add to the difficulty in learning.
\end{remark}

\subsection{The necessity of negative correlations}
\label{sec:osmd_subopt}
The previous section shows an improved performance for OSMD-G when $v^t$ has negative correlations, which is a requirement never seen in either the semi-bandit feedback or the full feedback in previous literature. In either of the two cases, OSMD with the mean condition (in Lemma~\ref{lem:prob_properties}) alone is sufficient to achieve the near-optimal regret. 

Then, one may naturally ask if the vanilla OSMD-G with only the mean condition still achieves this improved rate, i.e. when it only guarantees $\E_{v^t\sim p^t}[v^t] = x^t$. The answer is negative.

\begin{theorem}\label{thm:osmdg_subopt}
Fix any problem parameters $(K,S,\alpha,T)$ with $S\alpha \le K$, $S\le \frac{K}{2}$, and $T\ge\max\bra{S,\alpha^3}$, and consider the full decision set $\Acal$. There exists a feedback graph $G=([K],E)$ and a sampling scheme $p^t$ that satisfies $\E_{v^t\sim p^t}[v^t] = x^t$, such that
\[
\sup_{\bra{r^t}}\E\parq{\reg(\pi_{0})} = \Omega\parr*{S\sqrt{\alpha T}}
\]
where $\pi_0$ denotes OSMD-G equipped with this $p^t$ and mirror mapping $F(x) = \sum_{i=1}^K(x_i\log x_i - x_i)$.
\end{theorem}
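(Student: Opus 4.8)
The plan is to exhibit a feedback graph $G$ and a "bad" sampling scheme $p^t$ that is mean-correct but has strong \emph{positive} correlations, in such a way that OSMD-G's reward estimators blow up in variance. Concretely, I would take $G$ to be a disjoint union of $\alpha$ cliques of equal size $K/\alpha$, so that $\alpha(G)=\alpha$; within such a graph, the "good" estimator variance term $\sum_a x^t_a(\hat r^t_a)^2$ is governed by $\E[(\sum_{i\in\Nin(a)} v^t_i)^2]$, i.e. by the second moment of the number of selected arms in the clique containing $a$. Under the negatively-correlated scheme of \cref{lem:prob_properties} this is $(\sum_i x^t_i)^2 + \sum_i x^t_i$, but I will instead design $p^t$ so that $v^t$ is "block-coherent": conditionally on how many arms fall in each clique, the arms are chosen to be maximally bunched. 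The cleanest choice is to let $p^t$ first pick, for each of the $S$ "slots," a clique according to the marginals implied by $x^t$, but \emph{force all $S$ arms into as few cliques as possible} whenever $x^t$ permits — so that $\sum_{i\in\Nin(a)} v^t_i$ is typically of order $S$ rather than order $Sx^t_a$-summed. This makes the estimator variance scale like $S$ times the semi-bandit term, recovering the naive $\widetilde O(S\sqrt{\alpha T})$ upper bound as a matching \emph{lower} bound on what OSMD-G can guarantee.

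The core of the argument is then a lower-bound (adversarial) construction showing OSMD-G with this $p^t$ actually suffers $\Omega(S\sqrt{\alpha T})$. I would reuse the stochastic hard instance from the proof of \cref{thm:lower_bound}: partition one clique's worth of arms — or rather, since here $\alpha$ is the number of cliques, place one informative arm per clique and replicate the MAB-within-independent-set construction across $S$ "copies." The key point is that OSMD-G's multiplicative-weights update $w^{t+1}=x^t\exp(\eta\tilde r^t)$ with a fixed $\eta$ tuned to the \emph{true} optimal rate $\eta=\widetilde\Theta(\sqrt{S/((S+\alpha)T)})$ cannot also control an estimator whose per-step second moment is $\Theta(S^2 + S\alpha)T$-ish; plugging the inflated variance $\sum_t\sum_a x^t_a(\tilde r^t_a)^2 = \Omega(S\alpha T)$ into the OSMD regret identity \eqref{eq:tempeq1} forces, for \emph{any} choice of $\eta$, a regret of order $\sqrt{S\log(K/S)\cdot S\alpha T} = \Omega(S\sqrt{\alpha T})$ up to the log. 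To turn the "the variance term is large" observation into an honest regret lower bound rather than just an upper-bound obstruction, I would instead argue directly: on the hard instance, the positively-correlated sampling means that with constant probability an entire clique is either fully observed or fully unobserved at a given round, so the effective number of informative observations about a given sub-problem is $\Theta(T/S)$ only when that clique is "hit," and a change-of-measure / Pinsker argument exactly paralleling \eqref{eq:lower_eq1}–\eqref{eq:lower_eq2} — but with the per-round information $\Prob[a_{m,u}\in\Nout(v^t)]$ replaced by its correct value under the bunched $p^t$ — yields the $\Omega(S\sqrt{\alpha T})$ bound.

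The main obstacle is the second half: unlike the genuine minimax lower bound of \cref{thm:lower_bound}, here the learner's algorithm is \emph{fixed} (it is OSMD-G with a specific $F$ and $p^t$), so I cannot appeal to a generic information-theoretic argument over all policies — I must show that this particular update rule, with its particular step size, fails. The delicate part is ruling out that OSMD-G's self-tuning (via the truncation $\epsilon$ and the learning rate as a function of $\alpha$) adapts to the bad sampling scheme; I expect to handle this by noting that $\eta$ is chosen as a fixed function of $(S,\alpha,K,T)$ only, independent of the realized correlations of $p^t$, so the adversary can first see $(\eta,\epsilon)$ and then the inflated-variance instance defeats it. A secondary technical point is verifying that the proposed block-coherent $p^t$ is genuinely mean-correct for every $x^t\in\conv_\epsilon(\Acal)$ that OSMD-G can reach — this requires a small rounding construction (e.g. a dependent-rounding variant that maximizes, rather than minimizes, within-block co-occurrence), but it is routine and can be stated as a lemma analogous to \cref{lem:prob_properties}.
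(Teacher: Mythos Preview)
Your proposal has a real gap at exactly the point you flag. Showing that the estimator second moment $\sum_{t,a} x^t_a(\tilde r^t_a)^2$ is large under a bunched $p^t$ is, as you say, only an upper-bound obstruction: inequality \eqref{eq:tempeq1} points the wrong way to conclude large regret from large variance. Your fallback is a Pinsker/change-of-measure argument tailored to $\pi_0$, but to carry that out you must control $\Prob_{v^t\sim p^t(\cdot;x^t)}[a_{m,u}\in\Nout(v^t)]$ at every round, and this probability depends on $x^t$, which depends on the full OSMD update trajectory under the specific $p^t$ --- precisely the object you never analyze. Saying that ``with constant probability an entire clique is either fully observed or fully unobserved'' presupposes you know how $x^t$ distributes mass across cliques over time, and you give no mechanism to pin that down. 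A separate loose end: the ``block-coherent, maximally positively correlated'' rounding satisfying $\E[v^t]=x^t$ for \emph{every} $x^t\in\conv_\epsilon(\Acal)$ is not obviously routine; when $x^t$ is spread thinly over many cliques you cannot bunch, so the scheme has to degrade gracefully and you would need to quantify that.

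The paper avoids all of this with a reduction rather than a direct regret calculation. It partitions $[K]$ into $K/S$ cliques $V_1,\dots,V_{K/S}$ of size $S$ (not $\alpha$ cliques of size $K/\alpha$), and connects the cliques according to an auxiliary graph $H$ on $K/S$ nodes with $\alpha(H)=\alpha$. The sampling scheme is: whenever $x^t$ is constant on each clique, set $v^t=V_i$ with probability $x^t(V_i)$; otherwise use any mean-correct rule. The key step is a short induction showing that if the adversary makes each $r^t$ constant on cliques, then the negative-entropy update $w^{t+1}=x^t\exp(\eta\tilde r^t)$ and the projection \eqref{eq:adv_bregman_projection} preserve clique-constancy of $x^t$, so $v^t$ is always an entire clique. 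Consequently $\pi_0$ is \emph{literally} a multi-armed bandit policy on $K/S$ arms with feedback graph $H$ and rewards in $[0,S]$, and the existing MAB-with-graph-feedback lower bound $\Omega(S\sqrt{\alpha T})$ applies directly. No variance computation, no trajectory tracking, and the ``bad'' $p^t$ only needs to be specified on clique-aligned $x^t$, which are the only points the algorithm ever visits.
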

\begin{proof}
The core idea of this proof is that, for some $G$ and $p^t$, running the vanilla OSMD-G on this problem instance is equivalent to running OSMD on a multi-armed bandit with rewards ranging in $[0,S]$. Without loss of generality, assume $K=nS$ for $n\in\mathbb{N}$.\footnote{If $S$ does not divide $K$, one can put the remainder nodes in one of the cliques and slightly change the sampling $p^t$ to draw uniformly within this clique, while maintaining the mean condition.} By assumption $\alpha \le n$.

First, we construct the graph $G$. Let $V_1,\dots,V_n$ partition the nodes $[K]$ each with size $S$, and let $H=([n], E_n)$ be an arbitrary graph on $n$ nodes with independence number $\alpha(H)=\alpha$. Then we let $(a,b)\in E$ iff either $a,b\in V_i$ or $a\in V_i$, $b\in V_j$, and $(V_i,V_j)\in E_n$, i.e. each $V_i$ is a clique and $H$ is a graph over the cliques.

For clarity, we denote the mean condition as
\begin{equation}\label{eq:mean_condition}\tag{M}
\E_{v^t\sim p^t}[v^t] = x^t
\end{equation}
and for vector $q\in\R^K$, we say $q$ aligns with the cliques if
\begin{equation}\label{eq:clique_align}\tag{AC}
q_a = q_b \equiv q(V_i),\quad \forall a,b\in V_i\quad \forall i\in[n].
\end{equation}
Now we consider a sampling scheme $p^t$ as follows: (1) if $x^t$ satisfies \eqref{eq:clique_align}, then let $v^t=V_i$ with probability $x^t(V_i)$; (2) otherwise, use any distribution $p^t$ satisfying \eqref{eq:mean_condition}. Note that (1) gives a valid distribution over the cliques and satisfies \eqref{eq:mean_condition}. We will show via an induction that if $r^t$ satisfies \eqref{eq:clique_align} for all $t\in[T]$, then (2) never happens. As the base case, the OSMD initialization $x^1=\frac{1}{K}\mathbf{1}$ satisfies \eqref{eq:clique_align}. 

For the inductive step, when $x^t$ satisfies \eqref{eq:clique_align}, we have $v^t=V_i$ for some $i$ and thereby satisfies \eqref{eq:clique_align}. By construction of $G$, the reward estimator $\Tilde{r}^t$ also satisfies \eqref{eq:clique_align}. Given the negative entropy mapping $F$, straightforward computation shows that both $w^{t+1}$ and $x^{t+1}$ satisfy \eqref{eq:clique_align}, completing the induction. Consequently, we have $v^t= V_{i_t}$ for some $i_t\in[n]$ when $r^t$ satisfies \eqref{eq:clique_align} for all $t\in[T]$. Namely, OSMD-G now reduces to a policy running on an $n$-armed bandit with feedback graph $H$, and now the lower bound of the latter can apply.

From the lower bound of the multi-armed bandits with feedback graphs (see e.g. \citet{alon2015online}), there exists a set of reward sequences $\bra{h^t(j)}_{t\in[T],j\in\mathcal{J}}$ with some index set $\mathcal{J}$ and $h^t(j)\in[0,S]^n$ such that
\[
\E_{j\sim\mathsf{Unif}(\mathcal{J})}[\reg_{j,\mathsf{MAB}}(\pi)] = \Omega(S\sqrt{\alpha T})
\]
for any policy $\pi$, where $\reg_{j,\mathsf{MAB}}(\pi)$ denotes the multi-armed bandit regret when the reward sequence is $\bra{h^t(j)}_{t,\in[T]}$. Define the clique-averaged reward sequences by $r^t_a(j) = \frac{h^t_i(j)}{|V_i|}\in[0,1]$ for $a\in V_i$ for each $j\in\mathcal{J}$. Since \eqref{eq:clique_align} is guaranteed, we have
\[
\sup_{\bra{r^t}}\E\parq{\reg(\pi_{0})} \ge \E_{j\sim\mathsf{Unif}(\mathcal{J})}[\reg_j(\pi_0)] = \Omega(S\sqrt{\alpha T})
\]
where $\reg_j(\pi_0)$ denotes the regret for this vanilla OSMD-G $\pi_0$ under reward sequence $\bra{r^t(j)}_{t\in[T]}$.
\end{proof}

We remark that Theorem~\ref{thm:osmdg_subopt} does not directly show the necessity of negative correlations, even though they are sufficient as shown by Theorem~\ref{thm:upper_bound}. It only says that the mean condition alone is insufficient when dealing with general graph feedback, despite its success in the existing literature. It is possible that imposing extra conditions other than negative correlations can also lead to the near-optimal regret.

\section{Solving semi-bandits with general capacity}\label{sec:csb_general_capacity}
In this section, we introduce a natural extension of combinatorial semi-bandits and show how we derive a near-optimal regret by applying the graph feedback. Specifically, consider the semi-bandit feedback where the learner observes $\bra{v_ar^t_a:a\in[K]}$, but now each arm $a$ can be selected for at most $n_a \ge 1$ times, i.e. the decision set becomes
\[
\Acal \equiv \bra{v\in[n_1]\times[n_2]\times\cdots\times[n_K]: \|v\|_1=S}.\footnote{As a motivation, consider dynamic allocations with $S$ units of resource at each time, and the $K$ arms have different capacities for the amount of resource they can consume and transform to utility. Their capacities can even be time-varying, by \cref{thm:upper_bound}.}
\]
Existing results do not directly apply to this extension. Instead, one can consider the equivalent problem with $N\equiv\sum_{a=1}^K n_a$ arms by having $n_a$ copies of arm $a$ for each $a\in[K]$, with the special structure that $r^t_{a(i)}=r^t_{a(j)}$ for all $t\in[T]$ and $i,j\in[n_a]$, when $\bra{a(i):i\in[n_a]}$ are the copies of $a$. If we simply take the upper bound for the semi-bandit feedback from \citet{audibert2014regret} and ignore this structure, we arrive at regret $\widetilde{O}(\sqrt{NST}) = \widetilde{O}\parr*{\sqrt{ST\sum_{a=1}^Kn_a}}$.

On the other hand, thanks to this special structure, there is a feedback graph that consists of $K$ cliques, each with $n_a$ nodes. Then we can exploit it using \cref{alg:osmdg} proposed in this work. Applying \cref{thm:upper_bound} leads to the regret $\widetilde{\Theta}(\sqrt{KST})$ (when $K\ge S$; otherwise it is $\widetilde{\Theta}(S\sqrt{T})$), which is near-optimal when $n_a=1$ for $\Omega(K)$ arms following \cref{thm:lower_bound}. Remarkably, although each arm has a different `consumption' capacity, the regret characterization remains the same. This crucially relies on exploiting the feedback structure present in this equivalent formulation.

\section{Extension to general decision subsets}

\subsection{When negative correlations are impossible}
\label{sec:general_subset}
So far, we have shown the optimal regret $\widetilde\Theta(S\sqrt{T}+ \sqrt{\alpha ST})$ on the full decision set $\Acal$. Our upper bound in Theorem~\ref{thm:upper_bound} fails on general decision subsets $\Acal_0\subseteq \Acal$, because it is not always possible to find a distribution $p^t$ for the decision $v^t$ in OSMD-G that provides the negative correlations in Lemma~\ref{lem:prob_properties}. For example, when there is a pair of arms $(a,b)$ with $v_a = v_b$ for all $v\in\Acal_0$, it is simply impossible to achieve negative correlations. 

This failure, however, is not merely an analysis artifact. In the following, we present an example where moving from the full set $\Acal$ to a proper subset $\Acal_0\subsetneq \Acal$ provably increases the optimal regret to $\widetilde\Theta(\min\bra{S\sqrt{\alpha T},\sqrt{KST}})$ when $S\le \frac{K}{2}$. This argument is very similar to the proof of Theorem~\ref{thm:osmdg_subopt}.

We first consider the case $S\alpha\le K$. Assume again $S\le \frac{K}{2}$ and $S$ divides $K$. We let $V_1,V_2,\dots, V_{K/S}$ be a partition of the arms $[K]$ of equal size $S$. For the feedback graph $G$, let each $V_i$ be a clique for $i=1,\dots,K/S$. Let $H=(\bra{V_1,\dots, V_{K/S}}, \overline{E})$ be an arbitrary other graph over the cliques such that $(V_i, V_j)\in\overline{E}$ in $H$ iff $(a, b)\in E$ for all $a\in V_i$ and $b\in V_j$ in $G$. The independence numbers $\alpha(G) = \alpha(H)$ are equal. On the full decision set $\Acal$, Theorem~\ref{thm:upper_bound} and \ref{thm:lower_bound} tell us the optimal regret is $\widetilde\Theta(S\sqrt{T}+ \sqrt{\alpha ST})$.

Now consider a proper decision subset
\begin{equation}\label{eq:subset}
\Acal_{\text{partition}} = \bra{\mathbf{1}_{1:S}, \mathbf{1}_{S+1:2S},\dots,\mathbf{1}_{K-S+1:K}}
\end{equation}
where $(\mathbf{1}_{i:j})_k=\indic[i\leq k\leq j]$ is one on the coordinates from $i$ to $j$ and zero otherwise. Namely, the only feasible decisions are the first $S$ arms in $V_1$, the next $S$ arms in $V_2$, ..., and the last $S$ arms in $V_{K/S}$. It is straightforward to see that this problem is equivalent to a multi-armed bandit with $K/S$ arms and a feedback graph $H$, and the rewards range in $[0,S]$. From the bandit literature \cite{alon2015online}, the optimal regret on this decision subset $\Acal_{\text{partition}}$ is $\widetilde\Theta(S\sqrt{\alpha T})$ which is fundamentally different from the result for the full decision set, even under the same feedback graph. 

On the other hand, if $S\alpha > K$, a similar construction follows, except that some of the grouped nodes $V_i$ are no longer cliques in order to satisfy $\alpha(G)=\alpha$, and that the graph $H$ has only self-loops. Then $\alpha(H) = \frac{K}{S}$ and the regret is $\widetilde{\Theta}(\sqrt{KST})$. To formalize this statement:

\begin{theorem}
Fix any problem parameters $(K,S,\alpha,T)$ with $S\alpha \le K$, $S\le \frac{K}{2}$, and $T\ge \max\bra{S,\alpha^3}$. There exists a decision subset $\Acal_0\subsetneq \Acal$ such that
\[
\reg^*(\Acal_0) = \Omega\parr*{\min\bra{S\sqrt{\alpha T},\sqrt{KST}}}
\]
where $\reg^*(\Acal_0)$ denotes the minimax regret, as defined in \eqref{eq:minimax_reg_def}, on this subset $\Acal_0$.
\end{theorem}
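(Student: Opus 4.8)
The plan is to reduce the problem, under a carefully chosen feedback graph $G$ and decision subset $\Acal_0$, to a multi-armed bandit with a feedback graph and rewards scaled into $[0,S]$, and then to invoke the known $\widetilde\Theta$-type lower bound for that model (e.g.\ \citet{alon2015online}). This mirrors the argument in the proof of \cref{thm:osmdg_subopt}, but here the reduction happens at the level of the decision set itself rather than the sampling scheme, so it holds for \emph{every} policy $\pi$, not just vanilla OSMD-G. First I would split into the two regimes indicated in the text. In the regime $S\alpha\le K$, assume (by padding, as in the footnote of \cref{thm:osmdg_subopt}) that $S\mid K$, partition $[K]$ into $K/S$ blocks $V_1,\dots,V_{K/S}$ of size $S$, and build $G$ by making each $V_i$ a clique and adding all edges between $V_i$ and $V_j$ whenever $(V_i,V_j)\in \overline E$ for an auxiliary graph $H$ on the $K/S$ blocks with $\alpha(H)=\alpha$; one checks $\alpha(G)=\alpha(H)=\alpha$ and that $G$ is strongly observable. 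Take $\Acal_0=\Acal_{\mathrm{partition}}$ as in \eqref{eq:subset}, so the only feasible actions are the blocks themselves.

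The key step is the equivalence: since any policy on $\Acal_{\mathrm{partition}}$ picks exactly one block $V_{i_t}$ at each round, observes (because $V_{i_t}$ is a clique and by construction of the inter-block edges) exactly the aggregate rewards of $V_{i_t}$ and of every out-neighboring block, and collects payoff $\sum_{a\in V_{i_t}} r^t_a$, it is in bijection with a policy on an $(K/S)$-armed bandit with feedback graph $H$ and per-arm rewards $h^t_i=\sum_{a\in V_i} r^t_a\in[0,S]$. Conversely, given any adversarial reward sequence $\{h^t(j)\}$ for that MAB (over whatever index set $\mathcal J$ the $\Omega(S\sqrt{\alpha T})$ lower bound of \citet{alon2015online} uses, after rescaling its $[0,1]$ construction by $S$), define $r^t_a(j)=h^t_i(j)/S$ for $a\in V_i$; this is a valid reward sequence in $[0,1]^K$, and the combinatorial regret on $\Acal_{\mathrm{partition}}$ equals the MAB regret on $H$. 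Hence $\reg^*(\Acal_{\mathrm{partition}})=\Omega(S\sqrt{\alpha T})$, where the hypothesis $T\ge\max\{S,\alpha^3\}$ is exactly what the MAB lower bound needs (note $\alpha^3/S\le\alpha^3$, so this is at least as strong as the condition in \cref{thm:lower_bound}). For the regime $S\alpha>K$, I would keep the block structure but drop enough clique edges so that $\alpha(G)=\alpha$ can still be met while $H$ has only self-loops, giving $\alpha(H)=K/S$; the same reduction then yields $\reg^*(\Acal_0)=\Omega(\sqrt{KST})$ via the classical $\widetilde\Theta(\sqrt{KST})$ semi-bandit lower bound (equivalently a $(K/S)$-armed bandit with $[0,S]$ rewards and no graph structure). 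Combining the two regimes gives $\Omega(\min\{S\sqrt{\alpha T},\sqrt{KST}\})$ in all cases, since when $S\alpha\le K$ one has $S\sqrt{\alpha T}\le\sqrt{KST}$ and when $S\alpha>K$ the reverse, so the stated minimum is achieved.

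The main obstacle is getting the graph construction in the $S\alpha>K$ regime simultaneously right on all three counts: (i) $\alpha(G)=\alpha$ exactly (not just $\le$ or $\ge$), (ii) strong observability of $G$, and (iii) the induced block graph $H$ being edgeless apart from self-loops so that the reduction really lands on a plain semi-bandit lower bound. Concretely I expect to realize $\alpha$ by choosing, within a few of the blocks, an independent set of the appropriate leftover size while making all other blocks (or the rest of each block) cliques, and verifying no larger independent set can be assembled across blocks; this is a finite combinatorial check but needs care with the parameter arithmetic $\lceil\cdot\rceil/\lfloor\cdot\rfloor$ when $S\nmid K$ or $S\nmid\alpha$. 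Everything else — the policy-to-policy bijection, the reward rescaling, and citing the two known lower bounds — is routine, so I would state the $S\alpha\le K$ case in full and indicate the modification for $S\alpha>K$ as above.
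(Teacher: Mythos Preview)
Your proposal is correct and matches the paper's approach: partition $[K]$ into $S$-blocks, make each a clique connected via an auxiliary graph $H$ with $\alpha(H)=\alpha$, take $\Acal_0=\Acal_{\mathrm{partition}}$, and reduce to a $(K/S)$-armed bandit with feedback graph $H$ and $[0,S]$-valued rewards to invoke the $\Omega(S\sqrt{\alpha T})$ lower bound. One remark: the theorem as stated already assumes $S\alpha\le K$, under which $\min\{S\sqrt{\alpha T},\sqrt{KST}\}=S\sqrt{\alpha T}$, so your entire second-regime construction---and the ``main obstacle'' you identify---is unnecessary here; the paper treats the $S\alpha>K$ case only informally in the surrounding discussion, not as part of this formal statement.
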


Given this (counter-)example, the following upper bound is of interest:

\begin{theorem}\label{thm:general_subset}
On general decision subset $\Acal_0\subseteq\Acal$ where only the mean condition is guaranteed, the algorithm OSMD-G achieves
\[
\E\parq{\reg(\mathrm{Alg~\ref{alg:osmdg}})} = \widetilde{O}\parr*{S\sqrt{\alpha T}}.
\]
In particular, when $S\alpha > K$, one can ignore the graph feedback and directly apply OSMD. The combination of OSMD and OSMD-G then guarantees $\widetilde{O}\parr*{\min\bra{S\sqrt{\alpha T}, \sqrt{KST}}}$.
\end{theorem}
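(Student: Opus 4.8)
\textbf{Proof proposal for Theorem~\ref{thm:general_subset}.}

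The plan is to reuse the regret decomposition from the proof of Theorem~\ref{thm:upper_bound_detailed} up through \eqref{eq:tempeq1}, which only relies on the standard OSMD guarantee on the truncated convex hull and not on negative correlations. The work therefore reduces to re-bounding the stability term $\E\parq*{\sum_{t=1}^T\sum_{a=1}^K x^t_a (\Tilde r^t_a)^2}$ using only the mean condition $\E_{v^t\sim p^t}[v^t]=x^t$. Just as before we write $(\Tilde r^t_a)^2 = (1-\hat r^t_a)^2 \le 1 + (\hat r^t_a)^2$, so that $\sum_{t,a} x^t_a(\Tilde r^t_a)^2 \le ST + \text{(A)}$ with $\text{(A)} = \sum_{t,a} x^t_a (\hat r^t_a)^2$. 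For arms $a\in U$ with no self-loop the bound \eqref{eq:no_self_loop_reg} still goes through verbatim, giving $O(ST)$. For $a\notin U$, I would bound, as indicated in the discussion right after the proof of Theorem~\ref{thm:upper_bound_detailed}, the second moment of $\sum_{i\in\Nin(a)} v^t_i$ crudely: since $\|v^t\|_1 = S$ and each $v^t_i\in\{0,1\}$, we have $\bigl(\sum_{i\in\Nin(a)} v^t_i\bigr)^2 \le S\sum_{i\in\Nin(a)} v^t_i$ pointwise, hence $\E\bigl[\bigl(\sum_{i\in\Nin(a)} v^t_i\bigr)^2\bigr] \le S\sum_{i\in\Nin(a)} x^t_i$. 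Plugging this into the expression for $\hat r^t_a$ yields $\E[x^t_a(\hat r^t_a)^2] \le S\,\E\bigl[ x^t_a / \sum_{i\in\Nin(a)} x^t_i\bigr]$, and then summing over $a\notin U$ and applying Lemma~\ref{lem:alon_alpha} on the restricted subgraph $G|_{U^c}$ (with $\alpha(G|_{U^c})=\alpha$, exactly as in step (c) of that proof) bounds term (A) by $\widetilde O(S\alpha T)$.

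Combining, $\E\parq*{\sum_{t,a} x^t_a (\Tilde r^t_a)^2} = \widetilde O(S\alpha T)$, and substituting into \eqref{eq:tempeq1} gives $\E[\reg] \le \epsilon KT + S\log(K/S)/\eta + \eta\,\widetilde O(S\alpha T)$. Choosing $\epsilon = 1/(KT)$ and optimizing $\eta \asymp \sqrt{\log(K/S)/(\alpha T)}$ (up to log factors) yields $\E[\reg(\mathrm{Alg~\ref{alg:osmdg}})] = \widetilde O(S\sqrt{\alpha T})$, which is the first claim. I should double-check that the standard OSMD bound feeding into \eqref{eq:tempeq1} does not itself use anything beyond the mean condition — it does not, since that part is just the Bregman-divergence telescoping plus the unbiasedness of $\Tilde r^t$, and unbiasedness follows from the mean condition alone.

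For the second claim, when $S\alpha > K$, I would simply run OSMD with the semi-bandit estimator, i.e.\ ignore all the non-self-loop edges of $G$ and treat the feedback as pure semi-bandit feedback; this is legitimate since discarding observations can only shrink the feedback, and by \citet{audibert2014regret} this attains $\widetilde O(\sqrt{KST})$. One then takes whichever of the two algorithms (OSMD-G giving $\widetilde O(S\sqrt{\alpha T})$, or OSMD giving $\widetilde O(\sqrt{KST})$) has the smaller bound for the given parameters; since the bounds are known in advance from $(K,S,\alpha,T)$, this selection is deterministic and incurs no extra cost, giving $\widetilde O(\min\{S\sqrt{\alpha T}, \sqrt{KST}\})$.

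\textbf{Main obstacle.} The only delicate point is verifying that the portion of the Theorem~\ref{thm:upper_bound_detailed} proof I am quoting — from the regret decomposition through \eqref{eq:tempeq1} and the handling of the no-self-loop set $U$ in \eqref{eq:no_self_loop_reg} — genuinely never invokes the negative-correlation property, so that it remains valid under the weaker mean-condition-only hypothesis. Everything in that stretch is either deterministic (the $\epsilon KT$ slack, the $\|v^t\|_1 = S$ bookkeeping in step (b) of \eqref{eq:no_self_loop_reg}) or uses only linearity of expectation and $r^t_a\in[0,1]$, so I expect this to check out cleanly; the rest is the routine crude second-moment bound and learning-rate optimization already sketched in the paper's own remarks.
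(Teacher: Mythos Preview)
Your proposal is correct and follows essentially the same route as the paper's own proof in Appendix~\ref{app:upper_bound_proof}: reuse the OSMD decomposition through \eqref{eq:tempeq1}, keep the $U$-set bound \eqref{eq:no_self_loop_reg} unchanged (it uses only $\|v^t\|_1\le S$ and the mean condition), and on $U^c$ replace the negative-correlation step \eqref{eq:decompose_second_moment_sum} by the pointwise bound $(\sum_{i\in\Nin(a)} v^t_i)^2\le S\sum_{i\in\Nin(a)} v^t_i$ before invoking Lemma~\ref{lem:alon_alpha}. Your handling of the second claim via the \citet{audibert2014regret} semi-bandit bound and deterministic algorithm selection is likewise exactly what the paper intends.
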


For any target $x^t\in\conv(\Acal_0)$, there is always a probability distribution $p^t$ such that $\E_{v^t\sim p^t}[v^t]=x^t$, which is used in earlier works \cite{koolen2010hedging, audibert2014regret}. With this choice of $p^t$, OSMD-G achieves the regret in \cref{thm:general_subset}. The proof follows from Section~\ref{sec:upper_bound_proof} and is left to Appendix~\ref{app:upper_bound_proof}. Together with the construction of $\Acal_{\text{partition}}$ in \eqref{eq:subset}, it suggests that leveraging the negative correlations, whenever the decision subset $\Acal_0$ allows, is crucial to achieving improved regret $\widetilde{O}(S\sqrt{T}+\sqrt{\alpha ST})$. We will see examples of $\Acal_0$ where negative correlations are guaranteed in the next section.


Note on general $\Acal_0$, the efficiency of OSMD-G is no longer guaranteed; see discussions in \citet{koolen2010hedging, audibert2014regret}. To compensate, we provide an efficient elimination-based algorithm that is agnostic of the structure of the decision subset $\Acal_0$ and achieves $\widetilde{O}(S\sqrt{\alpha T})$ when the rewards are stochastic. The algorithm and its analysis are left in Appendix~\ref{app:stochastic}.

\subsection{When negative correlations are possible}\label{sec:neg_cor_possible}
This section aims to extend the upper bound in \cref{thm:upper_bound} to some other decision subsets $\Acal_0\subseteq\Acal$. First, by Theorem 1.1 in \citet{chekuri2009dependent}, \cref{lem:prob_properties} and OSMD-G can be generalized directly to any decision subset $\Acal_0'\subseteq \bra{v\in\{0,1\}^K: \|v\|_1\le S}$ that forms a matroid. Notably, matroids require that decisions with size less than $S$ are also feasible, hence they are different from the setup $\Acal_0\subseteq \Acal$ we consider throughout this work. 

In addition, while \citet{chekuri2009dependent} focuses on matroids, the proof of their Theorem 1.1 only relies on the following \textit{exchange property} of a decision set $\Acal_0$: for any $v,u\in\Acal_0$, there exist $i\in u-v$ and $j\in v-u$ such that $u-\bra{i}+\bra{j}, v-\bra{j}+\bra{i}\in\Acal_0$. \cref{lem:prob_properties} remains valid for any such $\Acal_0$. Here we provide an example of $\Acal_0\subsetneq \Acal$ that satisfies this property:

Consider the problem that the learner operates on $S$ systems in parallel, and on each system $s$ he/she has $K_s$ arms to choose from. Then $K=\sum_{s\in[S]}K_s$ and the feasible decisions are $\Acal_0=\bra{(v_1,\dots, v_S): v_s\in [K_s]}$. It is clear that this $\Acal_0$ satisfies the exchange property above, and hence OSMD-G and Theorem~\ref{thm:upper_bound} apply directly to such problems. The independence number $\alpha$ can be small if there is shared information among the $S$ systems.

\subsection{Other open problems}
\textbf{Weakly observable graphs:}
The results in this work focus on the strongly observable feedback graphs. A natural extension would be the minimax regret characterization when the feedback graph $G=([K],E)$ is only weakly observable. Recall that when $S=1$, \citet{alon2015online} shows the optimal regret is $\widetilde{\Theta}(\delta^{1/3}T^{2/3})$. 

To get a taste of it, consider a simple explore-then-commit (ETC) policy under stochastic rewards: the learner first explores the arms in a minimal dominating subset as uniformly as possible for $T_0$ time steps, and then commit to the $S$ empirically best arms for the rest of the time.\footnote{While finding the minimal dominating subset is NP-hard, there is an efficient $\log(K)$-approximate algorithm, which we include in Appendix~\ref{app:auxiliary} for completeness.} Its performance is characterized by the following result.
\begin{theorem}
With high probability, the ETC policy achieves regret $\widetilde{O}(ST^{2/3} + \delta^{1/3}S^{2/3}T^{2/3})$.
\end{theorem}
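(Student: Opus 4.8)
The ETC policy splits into an exploration phase of $T_0$ rounds and an exploitation phase of $T - T_0$ rounds, so I will bound the regret contributed by each and then optimize over $T_0$. During exploration, the learner pulls the arms in a dominating set $D$ as uniformly as possible; since $|D| \le \delta \cdot O(\log K)$ after the greedy approximation, and each round pulls $S$ arms, every arm in $D$ (and hence every arm in $[K]$, via its in-neighbor in $D$) gets observed roughly $\Omega(S T_0 / (\delta \log K))$ times over the $T_0$ exploration rounds. Each exploration round can cost up to $S$ in instantaneous regret (the comparator's per-round payoff is at most $S$), so the exploration regret is at most $S T_0$.

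**Exploitation regret.** After exploration, form the empirical mean $\hat\mu_a$ for each arm from its $\Omega(S T_0/(\delta\log K))$ observations; by Hoeffding, with high probability $|\hat\mu_a - \mu_a| \le \widetilde{O}(\sqrt{\delta/(S T_0)})$ simultaneously for all $a \in [K]$. Committing to the top-$S$ arms by empirical mean then yields a decision whose true value is within $2S \cdot \widetilde{O}(\sqrt{\delta/(S T_0)}) = \widetilde{O}(\sqrt{S\delta/T_0})$ of the optimum $\max_{v_*\in\Acal}\langle v_*,\mu\rangle$ (the factor $S$ because the comparator and the committed decision differ in at most $S$ coordinates, each mis-estimated by the uniform confidence width; a standard exchange argument shows the greedy top-$S$ choice is the true optimum over $\Acal$ under the noisy means up to this slack). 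Summing over the $T - T_0 \le T$ exploitation rounds gives exploitation regret $\widetilde{O}(T\sqrt{S\delta/T_0})$.

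**Balancing.** The total regret is $\widetilde{O}\big(S T_0 + T\sqrt{S\delta/T_0}\big)$. Setting $T_0 \asymp (\delta T^2 / S)^{1/3}$ balances the two terms and yields $\widetilde{O}\big(S^{2/3}\delta^{1/3}T^{2/3}\big)$; I also keep the crude bound $S T_0 \le S T^{2/3}$ available for the regime where the dominating-set bound is vacuous, giving the stated $\widetilde{O}(S T^{2/3} + \delta^{1/3}S^{2/3}T^{2/3})$. The high-probability qualifier propagates cleanly: the only randomness is in the exploration-phase rewards, so a union bound over the $K$ arms' concentration events suffices, and on the complement (probability $\le 1/\mathrm{poly}(T)$) we bound the regret trivially by $ST$, which is absorbed.

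**Main obstacle.** The delicate point is the exploitation-phase argument that choosing the empirically best $S$ arms is near-optimal over the \emph{combinatorial} set $\Acal$ rather than merely picking good individual arms: because the payoff is linear and the constraint is the simple cardinality constraint $\|v\|_1 = S$, the optimum is exactly the top-$S$ arms by true mean, and a swap/exchange argument controls the loss from using empirical means instead — but one must be careful that the uniform confidence width is the right scale and that the $S$ mismatched coordinates each contribute at most one confidence width, not more. The secondary nuisance is making the "as uniformly as possible" exploration precise enough to guarantee the per-arm observation count lower bound despite the greedy dominating set only being a $\log K$-approximation; this just needs a careful but routine counting argument over which arms are observed when the $S$ selected arms in a round are chosen round-robin from $D$.
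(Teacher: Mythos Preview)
Your approach is the same as the paper's brief sketch: bound the exploration regret by $ST_0$, use concentration to get a per-arm confidence width from the exploration-phase observation count, sum $S$ confidence widths over $T$ exploitation rounds, and balance $T_0$. The identification of the top-$S$ exchange argument and the round-robin counting over a greedy dominating set are exactly what the paper intends.

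There is one small gap in your treatment of the regime $\delta < S$. Your observation-count lower bound $\Omega(ST_0/(\delta\log K))$ is not valid there: once $|D|\le S$, every round already covers all of $D$, and each arm in $[K]$ is observed at most once per round regardless of how many pulled arms point to it, so the count is $T_0$, not $ST_0/\delta$. Consequently the confidence width is only $\widetilde O(1/\sqrt{T_0})$, not $\widetilde O(\sqrt{\delta/(ST_0)})$. With your fixed choice $T_0\asymp(\delta T^2/S)^{1/3}$ the exploitation term then becomes $ST/\sqrt{T_0}=S^{7/6}\delta^{-1/6}T^{2/3}$, which exceeds $ST^{2/3}$ when $\delta<S$; your ``crude bound $ST_0\le ST^{2/3}$'' only handles the exploration term. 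The paper fixes this with an explicit case split: for $\delta<S$ take $T_0=T^{2/3}$ and use the observation count $T_0$ directly, which gives both exploration and exploitation regret $\widetilde O(ST^{2/3})$. With that adjustment your argument is complete and coincides with the paper's.
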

When $S=1$, this policy is near-optimal. We briefly outline the proof here. When $\delta \geq S$, thanks to the stochastic assumption and concentration inequalities, each one of the $S$ empirically best arms contributes only a sub-optimality of order $\widetilde{O}(\sqrt{\delta/ST_0})$ with high probability. Trading off $T_0$ in the upper bound $ST_0 + ST\sqrt{\delta/(ST_0)}$ gives the bound $\widetilde{O}(\delta^{1/3}S^{2/3}T^{2/3})$. When $\delta <S$, a similar analysis yields the bound $\widetilde{O}(ST^{2/3})$.


\textbf{Problem-dependent bounds:}
With the semi-bandit feedback and stochastic rewards, \citet{combes2015combinatorial} proves a problem-dependent bound $\widetilde{O}\parr*{\frac{K\sqrt{S}}{\Delta_{\min}}}$ where $\Delta_{\min}$ denotes the mean reward gap between the best decision and the second-best decision, or equivalently the $S$-th best arm and the $(S+1)$-th under the full decision set $\Acal$. It would be another interesting question to see how the presence of feedback graph $G$ helps the problem-dependent bounds.

\section*{Acknowledgements}
This work is generously supported by the NSF award 2106508, ONR grant 13983263. The author is grateful to Yanjun Han for very helpful discussions and for pointing to the matroid literature. The author also thanks anonymous reviewers for pointing out a flaw in an earlier version of the lower bound proof and for the advice to present an impossibility result for OSMD without negative correlations.

\section*{Impact Statement}

This paper presents work whose goal is to advance the field of 
Machine Learning and the theoretical understanding of Online Learning. There are many potential societal consequences 
of our work, none of which we feel must be specifically highlighted here.


\bibliography{Paper}
\bibliographystyle{icml2025}

\newpage
\appendix
\onecolumn

\section{Proof of Theorem~\ref{thm:lower_bound}}\label{app:lower_bound_proof}

Under the full information setup (i.e. when $G$ is a complete graph), a lower bound $\Omega\parr{S\sqrt{T\log(K/S)}}$ was given by \cite{koolen2010hedging}, which implies that $\reg^*(G) = \Omega\parr{S\sqrt{T\log(K/S)}}$ for any general graph $G$. Note the assumption $S\le K/2$ is used in their proof to reduce the $K$ arms into an instance of multi-armed bandits with full information and $\floor{\frac{K}{S}}$ arms, which then gives the desired lower bound.

To show the second part of the lower bound, without loss of generality, we may assume $\alpha = nS$ for some $n\in\mathbb{N}_{\geq 4}$. Consider a maximal independent set $I\subseteq [K]$ and partition it into $I_1,\dots, I_S$ such that $|I_m|=n=\frac{\alpha}{S}$ for $m\in[S]$. Index each subset by $I_m = \bra*{a_{m,1},\dots,a_{m,n}}$. To construct a hard instance, let $u\in[n]^S$ be a parameter and the product reward distribution be $P^u = \prod_{a\in[K]}\mathsf{Bern}(\mu_a)$ where
\[
\mu_a = 
\begin{cases}
\frac{1}{4} + \Delta & \quad \text{if $a = a_{m,u_m}\in I_m$ for $m\in[S]$;}\\
\frac{1}{4} & \quad \text{if $a\in I\backslash\{a_{m,u_m}\}_{m\in[S]}$;}\\
0 & \quad \text{if $a\not\in I$.}
\end{cases}
\]
The reward gap $\Delta\in(0, 1/4)$ will be specified later. Also let $P^{u_{-m}}$ differ from $P^u$ at $\mu_a=\frac{1}{4}$ for all $a\in I_m$, where $u_{-m} = (u_1,\dots, u_{m-1}, 0, u_{m+1},\dots, u_S)$ denotes the parameter $u$ with $m$-th entry replaced by $0$. Then the following observations hold:
\begin{enumerate}
    \item For each $u\in[n]^S$, the optimal combinatorial decision is $v_*(u) = \{a_{m,u_m}\}_{m\in[S]}$, and any other $v\in\Acal$ suffers an instantaneous regret at least $\Delta|v \backslash v_*(u)|$;

    \item For each $u$ or $u_{-m}$, a decision $v\in\Acal$ suffers an instantaneous regret at least $\frac{1}{4}|v\cap I^c|$;
\end{enumerate}

Fix any policy $\pi$ and denote by $v_t$ the arms pulled by $\pi$ at time $t$. Let $N_{m,j}(t)$ be the number of times $a_{m,j}$ is pulled at the end of time $t$ and $N_m(t) = \sum_{j=1}^n N_{m,j}(t)$, and $N_0(t)$ be the total number of pulls outside $I$ at the end of time $t$. Let $u$ be uniformly distributed over $[n]^S$, $\E^{(u)}[\cdot]$ denote the expectation under environment $P^u$, and $\E_u[\cdot]$ denote the expectation over $u\sim\mathsf{Unif}([n]^S)$. 

Define the stopping time by $\tau_m = \min\bra{T,\min\bra{t:T_m(t)\ge T}}$. Note that $T\le N_m(\tau_m)\le T+S$ since at each round the learner can pull at most $S$ arms in $I_m$. Under any $u$, the regret is lower bounded by:
\begin{align*}
\E^{(u)}[\reg(\pi)] &\ge \Delta\E^{(u)}\parq*{N_0(T) + \sum_{m=1}^S N_m(T) - N_{m,u_m}(T)} = \Delta\E^{(u)}\parq*{\sum_{m=1}^S T-N_{m,u_m}(T)}\\
\E^{(u)}[\reg(\pi)] &\ge \Delta\E^{(u)}\parq*{\sum_{m=1}^S \sum_{t=1}^T\sum_{j=1}^n\indic[a_{m,j}\in v_t]} 
\ge \Delta\E^{(u)}\parq*{\sum_{m=1}^S \sum_{t=1}^{\tau_m}\sum_{j=1}^n\indic[a_{m,j}\in v_t]}\\
&= \Delta\E^{(u)}\parq*{\sum_{m=1}^S N_m(\tau_m) - N_{m,u_m}(\tau_m)}.
\end{align*}
Together with $x+y\ge \max\bra{x,y}$, we have
\begin{align*}
\E^{(u)}[\reg(\pi)] &\ge \frac{\Delta}{2}\sum_{m=1}^S\E^{(u)}\parq*{ \max\bra{T-N_{m,u_m}(T), N_m(\tau_m) - N_{m,u_m}(\tau_m)}}\\
&\ge \frac{\Delta}{2}\sum_{m=1}^S\E^{(u)}\parq*{T- N_{m,u_m}(\tau_m)}
\end{align*}
where the second line follows from the definition of $\tau_m$. Next, we lower bound the worst-case regret by the Bayes regret:
\begin{align*}
\max_{u\in[n]^S}\E^{(u)}[\reg(\pi)] &\ge \E_u\E^{(u)}[\reg(\pi)] \ge \frac{\Delta}{2}\sum_{m=1}^S\E_u\E^{(u)}\parq*{T- N_{m,u_m}(\tau_m)}\\
&= \frac{\Delta}{2}\sum_{m=1}^S\E_{u_{-m}}\parq*{\frac{1}{n}\sum_{u_m=1}^n\E^{(u)}\parq*{T- N_{m,u_m}(\tau_m)}}\\
&= \frac{\Delta}{2}\sum_{m=1}^S\E_{u_{-m}}\parq*{T- \frac{1}{n}\sum_{u_m=1}^n\E^{(u)}\parq*{N_{m,u_m}(\tau_m)}} \numberthis\label{eq:lower_eq1}
\end{align*}
For any fixed $m$, $u_{-m}$, and $u_m\in[n]$, let $\Prob_{m}$ denote the law of $N_{m,u_m}(\tau_m)$ under environment $u$, and $\Prob_{-m}$ denote the law of $N_{m,u_m}(\tau_m)$ under environment $u_{-m}$. Then
\begin{align*}
\E^{(u)}\parq*{N_{m,u_m}(\tau_m)} - \E^{(u_{-m})}\parq*{N_{m,u_m}(\tau_m)} 
&\stepa{\le} T\sqrt{\frac{1}{2}\mathsf{KL}\parr*{\Prob_{-m} \| \Prob_{m}}}\\
&\stepb{\le} T\sqrt{\frac{32\Delta^2}{3}\E^{(u_{-m})}\parq*{N_0(\tau_m) + N_{m,u_m}(\tau_m)}}\\
&\le 4\Delta T\sqrt{\E^{(u_{-m})}\parq*{N_0(T)} + \E^{(u_{-m})}\parq*{N_{m,u_m}(\tau_m)}}. 
\end{align*}
Here (a) uses Pinsker's inequality, and (b) uses the chain rule of the KL divergence, the inequality $\mathsf{KL}(\mathsf{Bern}(p) \| \mathsf{Bern}(q))\leq \frac{(p-q)^2}{q(1-q)}$ and $\Delta\in(0,1/4)$, and the important fact that $T_{m,u_m}(\tau_m)$ is $\Fcal_{\tau_m}$-measurable. The last fact crucially allows us to look at the KL divergence only up to time $\tau_m$.

Note that $\E^{(u_{-m})}\parq*{\reg(\pi)} \ge \frac{1}{4}\E^{(u_{-m})}\parq*{N_0(T)}$. So if $\E^{(u_{-m})}\parq*{N_0(T)} \ge \sqrt{\alpha ST}$ for any $m\in[S]$, the policy incurs too large regret under this environment $u_{-m}$ and we are done. Now suppose $\E^{(u_{-m})}\parq*{N_0(T)} < \sqrt{\alpha ST}$ for every $m$. By Cauchy-Schwartz inequality and the definition of $\tau_m$,
\begin{align*}
\sum_{u_m=1}^n\E^{(u)}\parq*{N_{m,u_m}(\tau_m)} &\le \sum_{u_m=1}^n\E^{(u_{-m})}\parq*{N_{m,u_m}(\tau_m)} + 4\Delta T\sqrt{n^2\sqrt{\alpha ST} + n\sum_{u_m=1}^n\E^{(u_{-m})}\parq*{N_{m,u_m}(\tau_m)}}\\
&\le T+S + 4\Delta T\sqrt{n^2\sqrt{\alpha ST} + n(T+S)}. \numberthis\label{eq:lower_eq3}
\end{align*}
Plugging \eqref{eq:lower_eq3} into \eqref{eq:lower_eq1} leads to
\begin{align*}
\max_{u\in[n]^S}\E^{(u)}[\reg(\pi)] 
&\ge \frac{\Delta}{2}\sum_{m=1}^S\E_{u_{-m}}\parq*{T- \frac{T+S}{n} - 4\Delta T\sqrt{\sqrt{\alpha ST} + \frac{T + S}{n}}}\\
&= \frac{\Delta ST}{2}-\frac{\Delta S(T+S)}{2n} - 2\Delta^2ST\sqrt{\sqrt{\alpha ST}+\frac{T + S}{n}}\\
&\stepc{\ge} \frac{\Delta ST}{4} - 4\Delta^2 ST\sqrt{\frac{T}{n}}.
\end{align*}
where (c) uses the assumptions that $T\ge S$, $n\ge 4$, and $\frac{2T}{n}\ge \sqrt{\alpha ST}$ when $T\ge \frac{\alpha^3}{S}$. Plugging in $\Delta = \frac{1}{64}\sqrt{\frac{n}{T}}$ and recalling $n=\frac{\alpha}{S}$ yield the desired bound
\[
\max_{u\in[n]^S}\E^{(u)}[\reg(\pi)] \ge \frac{1}{1024}\sqrt{\alpha ST}.
\]
Note that the constants in this proof are not optimized.

\section{Randomized Swap Rounding}\label{app:rsr}
This section introduces the randomized swap rounding scheme by \citet{chekuri2009dependent} that is invoked in \cref{alg:osmdg}. Note that randomized swap rounding is not always valid for any decision set $A$: its validity crucially relies on the exchange property that for any $u,c\in A$, there exist $a\in u\backslash c$ and $a'\in c\backslash u$ such that $u-\{a\}+\{a'\}\in A$ and $c-\{a'\}+\{a\}\in A$. This property is satisfied by the full decision set $\Acal$ as well as any subset $A\subseteq \bra{v\in\{0,1\}^K: \|v\|_1\le S}$ that forms a matroid. However, for general $A$ this can be violated, and as discussed in Section~\ref{sec:general_subset}, no sampling scheme can guarantee the negative correlations and the learner must suffer a $\widetilde{\Theta}(S\sqrt{\alpha T})$ regret.
\begin{algorithm}[ht!]\caption{Randomized Swap Rounding}
\label{alg:random_swap_rounding}
\textbf{Input:} decision set $A$, arms $[K]$, target $x=\sum_{i=1}^Nw_i v_i$ where $N=|A|$.

\textbf{Initialize:} $u \gets v_1$.

\For{$i=1$ \KwTo $N-1$}
{
Denote $c\gets v_{i+1}$ and $\beta_i\gets \sum_{j=1}^iw_j$.

\While{$u\neq c$}{
Pick $a\in u\backslash c$ and $a'\in c\backslash u$ such that $u-\{a\}+\{a'\}\in A$ and $c-\{a'\}+\{a\}\in A$.

With probability $\frac{\beta_i}{\beta_i + w_{i+1}}$, set $c\gets c-\{a'\}+\{a\}$;

Otherwise, set $u\gets u-\{a\}+\{a'\}$.
}
}
Output $u$.
\end{algorithm}

\section{Case $S=1$ in the proof of Theorem~\ref{thm:upper_bound_detailed}}
\label{app:s1}

In this section, we present the proof of Theorem~\ref{thm:upper_bound_detailed} for the special case $S=1$. The overall idea is the same as in Section~\ref{sec:upper_bound_proof} but requires an adaptation of Lemma 4 in \citet{alon2015online} to our reward setting.
\begin{proof}
Let $U = \bra{a\in[K]: (a,a)\notin E}$. For the clarity of notation, let $\Tilde{r}^t_a$ be defined as in \eqref{eq:adv_reward_est} and recall $\bar{r}^t = 1+\sum_{a\in U}x^t_a\hat{r}^t_a \ge 0$. Fix any $v\in\Acal$ and let $v_\epsilon=\argmin_{v'\in\conv_\epsilon(\Acal)}\|v-v'\|_1$. The regret becomes
\begin{align*}
\E\parq*{\sum_{t=1}^T\parr*{v-v^t}^\top r^t} \le \epsilon KT + \E\parq*{\sum_{t=1}^T\parr*{v_\epsilon-v^t}^\top r^t} = \epsilon KT + \E\parq*{\sum_{t=1}^T\parr*{v_\epsilon-v^t}^\top \parr*{r^t-c_t\mathbf{1}}}
\end{align*}
for any $c_t\in\R$ when $S=1$, where $\mathbf{1}\in\R^K$ denotes the all-one vector. Recall that $\Tilde{r}^t_a$ is an unbiased estimator of $r^t_a$ and plug in $c_t=\bar{r}^t$, we get
\begin{align*}
\E\parq*{\sum_{t=1}^T\parr*{v-v^t}^\top r^t} &\le \epsilon KT + \E\parq*{\sum_{t=1}^T\parr*{v_\epsilon-v^t}^\top \parr*{\Tilde{r}^t-\bar{r}^t\mathbf{1}}}.
\end{align*}
Following the same lines in the proof of Theorem~\ref{thm:upper_bound_detailed}, we arrive at a similar decomposition as \eqref{eq:tempeq1}:
\begin{align*}
\E\parq*{\sum_{t=1}^T\parr*{v_\epsilon-v^t}^\top \parr*{\Tilde{r}^t-\bar{r}^t\mathbf{1}}} &\le \frac{S\log(K/S)}{\eta} + \eta\E\parq*{\sum_{t=1}^T\sum_{a=1}^Kx^t_a\parr*{\Tilde{r}_a^t-\bar{r}^t}^2}. \numberthis\label{eq:tempeq2}
\end{align*}
Now for any time $t$, it holds that
\begin{align*}
\sum_{t=1}^T\sum_{a\in U_t}x^t_a\parr*{\Tilde{r}_a^t-\bar{r}^t}^2 &= \sum_{t=1}^T\sum_{a\in U_t}x^t_a\parr*{\hat{r}_a^t+\bar{r}^t-1}^2\\
&= \sum_{t=1}^T\sum_{a\in U_t}x^t_a\parr*{\hat{r}_a^t}^2 - \sum_{t=1}^T\parr*{\sum_{a\in U_t}x^t_a\hat{r}_a^t}^2\\
&\le \sum_{t=1}^T\sum_{a\in U}x^t_a\parr*{\hat{r}_a^t}^2 - \sum_{t=1}^T\sum_{a\in U}\parr*{x^t_a}^2\parr*{\hat{r}_a^t}^2\\
&= \sum_{t=1}^T\sum_{a\in U}x^t_a(1-x^t_a)\parr*{\hat{r}_a^t}^2
\end{align*}
where the inequality is due to the non-negativity of $x^t_a$ and $\hat{r}^t_a$. On the other hand, by definition of $U_t=\bra{a\in[K]: \hat{r}^t_a\le \frac{1}{(K-1)\epsilon}}$, it holds that $\bar{r}^t \le 1 + \frac{1}{(K-1)\epsilon}$. Then
\begin{align*}
\sum_{t=1}^T\sum_{a\notin U_t}x^t_a\parr*{\Tilde{r}_a^t-\bar{r}^t}^2 &\le \sum_{t=1}^T\sum_{a\notin U}x^t_a\parr*{\Tilde{r}_a^t}^2
\end{align*}
since $\Tilde{r}_a^t-\bar{r}^t \ge \hat{r}_a^t-\frac{1}{(K-1)\epsilon}\ge 0$ for each $a\notin U_t$ and $\bar{r}^t\ge 0$. Finally, for every $a\in U$, it holds that
\begin{align*}
\hat{r}^t_a \le \frac{1}{(K-1)\epsilon}
\end{align*}
since $x^t\in\conv_\epsilon(\Acal)$, and so $U\subseteq U_t$ for all time $t$. Substituting back in \eqref{eq:tempeq2}, we get
\begin{align*}
\E\parq*{\sum_{t=1}^T\parr*{v_\epsilon-v^t}^\top \parr*{\Tilde{r}^t-\bar{r}^t\mathbf{1}}} &\le \frac{S\log(K/S)}{\eta} \\
&+ \eta\E\parq*{\underbrace{\sum_{t=1}^T\sum_{a\in U}x^t_a(1-x^t_a)\parr*{\hat{r}_a^t}^2}_{\text{(A)}} + \underbrace{\sum_{t=1}^T\sum_{a\in U_t\backslash U}x^t_a(1-x^t_a)\parr*{\hat{r}_a^t}^2}_{\text{(B)}} + \underbrace{\sum_{t=1}^T\sum_{a\notin U_t}x^t_a\parr*{\Tilde{r}_a^t}^2}_{\text{(C)}}}. \numberthis\label{eq:s1_reg_decomp}
\end{align*}
First, we bound the expectation of term (A) as follows:
\begin{align*}
= \E\parq*{\sum_{t=1}^T\sum_{a\in U}x^t_a(1-x^t_a)\parr*{\hat{r}_a^t}^2} &= \E\parq*{\sum_{t=1}^T\sum_{a\in U}x^t_a\frac{\sum_{i\neq a}\indic[v^t_i=1](1-r^t_a)}{1-x^t_a}} \le \E\parq*{\sum_{t=1}^T\sum_{a\in U}x^t_a\frac{\sum_{i\neq a}\indic[v^t_i=1]}{1-x^t_a}}\\
&= \E\parq*{\sum_{t=1}^T\sum_{a\in U}x^t_a} \le ST.
\end{align*}
Note (C) can be decomposed as follows:
\begin{align*}
\E\parq*{\sum_{t=1}^T\sum_{a\notin U_t}x^t_a\parr*{\Tilde{r}_a^t}^2} &= \E\parq*{\sum_{t=1}^T\sum_{a\notin U_t}x^t_a\parr*{1 - \hat{r}^t_a}^2} \le \E\parq*{\sum_{t=1}^T\sum_{a\notin U_t}x^t_a\parr*{1 + \parr*{\hat{r}^t_a}^2}}\\
&= \E\parq*{\sum_{t=1}^T\sum_{a\notin U_t}x^t_a} + \E\parq*{\sum_{t=1}^T\sum_{a\notin U_t}x^t_a\parr*{\hat{r}^t_a}^2}.
\end{align*}
Since $1-x^t_a\in[0,1]$ in term (B), we can plug the above bounds back in \eqref{eq:s1_reg_decomp} and get
\begin{align*}
\E\parq*{\sum_{t=1}^T\parr*{v_\epsilon-v^t}^\top \parr*{\Tilde{r}^t-\bar{r}^t\mathbf{1}}} &\le \frac{S\log(K/S)}{\eta} + ST + \E\parq*{\sum_{t=1}^T\sum_{a\notin U}x^t_a\parr*{\hat{r}^t_a}^2}\\
&\le \frac{S\log(K/S)}{\eta} + ST + T\parr*{S+4\alpha\log\parr*{\frac{4KS}{\epsilon\alpha}}}
\end{align*}
where the last inequality follows from \eqref{eq:change_time_varying}.
\end{proof}

\section{Arm elimination algorithm for stochastic rewards}
\label{app:stochastic}
As promised in Section~\ref{sec:general_subset}, we present an elimination-based algorithm, called Combinatorial Arm Elimination, that is agnostic to the decision subset $\Acal_0$ and achieves regret $\widetilde{O}(S\sqrt{\alpha T})$. We assume the reward $r^t_i\in[0,1]$ for each arm $i\in[K]$ is i.i.d. with a time-invariant mean $\mu_i$. The algorithm maintains an active set of the decisions and successively eliminates decisions that are statistically suboptimal. It crucially leverages a structured exploration within the active set $\Aact$. In the proof below and in Algorithm~\ref{alg:comb_arm_elim}, for ease of notation, we let $v\in\Acal_0$ denote both the binary vector and the subset of $[K]$ it represents. So $a\in v\subseteq [K]$ if $v_a=1$.

\begin{algorithm}[h!]\caption{Combinatorial Arm Elimination}
\label{alg:comb_arm_elim}
\textbf{Input:} time horizon $T$, decision subset $\Acal_0\subseteq\Acal$, arm set $[K]$, combinatorial budget $S$, feedback graph $G$, and failure probability $\epsilon\in(0,1)$.

\textbf{Initialize:} Active set $\Aact\gets \Acal_0$, minimum count $N\gets 0$.

Let $(\br_a^t, n_a^t)$ be the empirical reward and the observation count of arm $a\in[K]$ at time $t$.

For each combinatorial decision $v\in\Aact$, let $\br_v^t = \sum_{a\in v}\br_a^t$ be the empirical reward and $n_v^t = \min_{a\in v}n_a^t$ be the minimum observation count.

\For{$t=1$ \KwTo $T$}{
Let $\Acal_N\gets \bra*{v\in \Aact: n_v^t = N}$ be the decisions that have been observed least.

Let $G_N$ be the graph $G$ restricted to the set $U_t=\bra*{a\in[K]: \exists v\in\Acal_N \textnormal{ with } a\in v} = \bigcup_{v\in \Acal_N}v$.

Let $a_t\in U_t$ be the arm with the largest out-degree (break tie arbitrarily).

Pull any decision $v_t\in \Acal_N$ with $a_t\in v_t$.

Observe the feedback $\bra*{r^t_a: a\in\Nout(v_t)}$ and update $(\br^t_a, n^t_a)$ accordingly.

\If{$\min_{v\in\Acal_N}n^t_v > N$}{
Update the minimum count $N\gets \min_{v\in\Aact}n^t_v$.

Let $\br^t_{\max}\gets \max_{v\in\Aact}\br^t_v$ be the maximum empirical reward in the active set.

Update the active set as follows:
\[
\Aact \gets \bra*{v\in\Aact : \br^t_v \geq \br^t_{\max} - 6S\sqrt{\frac{\log(2T)\log(KT/\epsilon)}{N}}}.
\]
}
}
\end{algorithm}

\begin{theorem}
Fix any failure probability $\epsilon\in(0,1)$. For any decision subset $\Acal_0\subseteq \Acal$, with probability at least $1-\epsilon$, Algorithm~\ref{alg:comb_arm_elim} achieves expected regret
\[
\E\parq{\reg(\mathrm{Alg~\ref{alg:comb_arm_elim}})} = \widetilde{O}\parr*{S\alpha+ S\sqrt{\log(KT/\epsilon)\alpha T}}.
\]
\end{theorem}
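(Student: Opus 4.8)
The plan is to run a standard successive‑elimination analysis, where the only genuinely new ingredient is a graph‑covering estimate that bounds the length of each exploration ``phase'' by $\widetilde{O}(\alpha)$ instead of $\widetilde{O}(K)$. First I would set up a clean event. Because the rewards $r^t_i$ are i.i.d.\ over $t$, the empirical mean $\br^t_a$ of arm $a$ after $n^t_a$ observations is an average of $n^t_a$ i.i.d.\ samples even though the count $n^t_a$ is data‑dependent; a Hoeffding bound with a union bound over $a\in[K]$ and over the at most $T$ possible values of each count yields that, with probability at least $1-\epsilon$, $\abs{\br^t_a-\mu_a}\le \sqrt{c\log(KT/\epsilon)/n^t_a}$ for an absolute constant $c$ and hence $\abs{\br^t_v-\mu_v}\le S\sqrt{c\log(KT/\epsilon)/n^t_v}$ for every decision $v$, with $\mu_v=\sum_{a\in v}\mu_a$ and $n^t_v=\min_{a\in v}n^t_a$. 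The $\log(2T)$ factor built into the elimination threshold of Algorithm~\ref{alg:comb_arm_elim} is slack that dominates $c$, so on this event every confidence radius used by the algorithm is valid. Conditioning on it, I would first show that $v_*=\argmax_{v\in\Acal_0}\mu_v$ is never eliminated: at an elimination round with minimum count $N$, both $v_*$ and the $\br^t_{\max}$‑attaining decision have count at least $N$, so $\br^t_{v_*}\ge \mu_{v_*}-S\sqrt{c\log(KT/\epsilon)/N}\ge \br^t_{\max}-2S\sqrt{c\log(KT/\epsilon)/N}\ge \br^t_{\max}-6S\sqrt{\log(2T)\log(KT/\epsilon)/N}$. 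Conversely, if $v$ is still active, then at the most recent elimination (minimum count $N$) one had $\br^t_v\ge \br^t_{\max}-6S\sqrt{\log(2T)\log(KT/\epsilon)/N}\ge \br^t_{v_*}-6S\sqrt{\cdots}$, and chaining the two‑sided bounds for $v$ and $v_*$ gives $\mu_{v_*}-\mu_v\le 8S\sqrt{\log(2T)\log(KT/\epsilon)/N}$. Thus throughout the phase with minimum count $N$ every pulled decision has instantaneous suboptimality at most $8S\sqrt{\log(2T)\log(KT/\epsilon)/N}$, and in the very first phase (no prior elimination) I use the trivial bound $S$.

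Next I would prove the phase‑length claim: each phase lasts $O(\alpha\log K)$ rounds. A phase with minimum count $N$ ends once every bottleneck arm of an active decision has been observed once more; arms with no self‑loop are observed by any pulled decision of size $\ge 2$ (and by any pulled singleton $\neq\{a\}$ when $S=1$, using strong observability), so they stall a phase for at most one round, and the remaining relevant arms form a set $W\subseteq U_t$ with independence number at most $\alpha$. Since Algorithm~\ref{alg:comb_arm_elim} always pulls a decision containing the maximum‑out‑degree vertex $a_t$ of the current restricted graph, each round covers $\abs{\Nout(a_t)\cap U_t}$ arms, and in a graph on $m$ vertices with independence number at most $\alpha$ the maximum out‑degree is at least $m/\alpha-1$; the usual greedy set‑cover recursion then clears $W$ in $O(\alpha\log K)$ rounds — the same estimate that underlies Lemma~\ref{lem:alon_alpha}. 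Because the minimum count strictly increases from one phase to the next and starts at $0$, the phase with index $\ell$ has minimum count $N_\ell\ge \ell-1$.

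To conclude I would sum the per‑round suboptimality over phases. Writing $t_\ell$ for the number of rounds in phase $\ell$, we have $t_\ell\le M:=O(\alpha\log K)$ and $\sum_\ell t_\ell=T$, so
\[
\sum_{t=1}^{T}\bigl(\mu_{v_*}-\mu_{v^t}\bigr)\;\le\; MS \;+\; 8S\sqrt{\log(2T)\log(KT/\epsilon)}\sum_{\ell\ge 2}\frac{t_\ell}{\sqrt{\ell-1}}.
\]
Since $x\mapsto x^{-1/2}$ is decreasing, the last sum is maximized by placing the whole budget on the earliest phases, giving $\sum_{\ell\ge 2}t_\ell/\sqrt{\ell-1}\le M\sum_{k\le T/M}k^{-1/2}\le 2\sqrt{MT}=O(\sqrt{\alpha T\log K})$, hence $\sum_t(\mu_{v_*}-\mu_{v^t})=\widetilde{O}(S\alpha+S\sqrt{\log(KT/\epsilon)\,\alpha T})$. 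Finally, to pass from this ``mean'' regret to the actual regret $\reg$ of \eqref{eq:reg_def}, one more Azuma/Hoeffding argument on a further $1-\epsilon$ event bounds $\max_v\sum_t\langle v,r^t\rangle-T\mu_{v_*}$ and $\sum_t\mu_{v^t}-\sum_t\langle v^t,r^t\rangle$ by $\widetilde{O}(\sqrt{ST\log(1/\epsilon)})=\widetilde{O}(S\sqrt{\alpha T})$, which is absorbed into the stated bound; and since $\reg\le ST$ deterministically, the same quantity also bounds the expectation after tuning $\epsilon$.

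The main obstacle is the phase‑length step: turning the ``pull through the largest‑out‑degree vertex'' rule into a genuine $\widetilde{O}(\alpha)$ bound. One has to cope with the moving target $U_t$ (benign, since it only shrinks), isolate the self‑loop‑free arms via strong observability, and check that choosing the vertex of largest out‑degree in $G|_{U_t}$ — rather than largest out‑degree \emph{into the still‑uncovered part} — loses only a constant factor. The independence‑number covering estimate behind Lemma~\ref{lem:alon_alpha} is exactly what makes this work and is the reason the regret scales with $\alpha$ rather than $K$; everything else is routine elimination bookkeeping.
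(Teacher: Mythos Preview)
Your plan mirrors the paper's proof almost step for step: concentration of $\br^t_a$ on a $1-\epsilon$ event (packaged in the paper as Lemma~\ref{lem:sto_reward_concentration}), the two consequences that $v_*$ is never eliminated and that every active $v$ has gap $O\!\bigl(S\sqrt{\log(KT/\epsilon)/N}\bigr)$ during phase $N$, a bound $T_n\le M=\widetilde O(\alpha)$ on each phase, and the final sum $\sum_n T_n\Delta_n\le M\sum_{n\le T/M}\Delta_n=O(\sqrt{MT})$. The only substantive divergence is in how the phase-length bound is obtained.

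There your Tur\'an-type claim ``max out-degree $\ge m/\alpha-1$'' is false for directed graphs: a regular tournament on $k$ vertices (with self-loops added) has independence number $1$ but every out-degree equal to $(k+1)/2$, so the claim fails once $k\ge 5$; disjoint unions give counterexamples for any $\alpha$. The paper sidesteps this by combining Lemma~\ref{lem:greedy_dom} (the greedy max-out-degree rule returns a dominating set of size $\le(1+\log K)\delta$) with Lemma~\ref{lem:alon15} ($\delta\le 50\alpha\log K$), obtaining $T_n\le 50(1+\log K)(\log K)\alpha$. Those are the lemmas to cite; Lemma~\ref{lem:alon_alpha}, which you reference, bounds a weighted sum and is the tool for the OSMD-G variance term, not for covering. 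With this substitution your argument goes through unchanged. Your closing Azuma step converting the pseudo-regret $\sum_t(\mu_{v_*}-\mu_{v^t})$ into the regret of~\eqref{eq:reg_def} is an addition the paper does not spell out.
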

\begin{proof}
Fix any $\epsilon\in(0,1).$ For any $n\geq 0$, denote $\Delta_n = 3\sqrt{\log(2T)\log(KT/\epsilon)/n}$ (let $\Delta_0=1$ for simplicity). During the period of $N=n$, by Lemma~\ref{lem:sto_reward_concentration}, with probability at least $1-\epsilon$, we have $\abs*{\br^t_a-\mu_a}\leq \Delta_n$ for any individual arm $a\in U_t$ at any time $t$. In the remaining proof, we assume this event holds. Then the optimal combinatorial decision $v_*$ is not eliminated at the end of this period, since
\begin{align*}
\br^t_{v_*} \geq \mu_{v_*} - S\Delta_n \geq \mu_{\max} - S\Delta_n \geq \br^t_{\max} - 2S\Delta_n.
\end{align*}
In addition, for any $v\in\Aact$, the elimination step guarantees that
\begin{equation}\label{eq:sto_bounded_suboptimality}
\mu_v \geq \br^t_v - S\Delta_n \geq \br^t_{\max} - 3S\Delta_n \geq \br^t_{v_*} - 3S\Delta_n \geq \mu_{v_*} - 4S\Delta_n.
\end{equation}
Let $T_n$ be the duration of $N=n$. Recall that $a_t\in U_t$ has the largest out-degree in the graph $G$ restricted to $U_t$. By Lemma~\ref{lem:greedy_dom} and Lemma~\ref{lem:alon15}, we are able to bound $T_n$:
\[
T_n \leq (1+\log(K))\delta(G_N) \leq 50\log(K)(1+\log(K))\alpha(G_N) \leq 50\log(K)(1+\log(K))\alpha\equiv M.
\]
By \eqref{eq:sto_bounded_suboptimality}, the regret incurred during $T_n$ is bounded by $4S\Delta_nT_n$. Thus with probability at least $1-\epsilon$, the total regret is upper bounded by
\begin{align*}
\E\parq{\reg(\mathrm{Alg~\ref{alg:comb_arm_elim}})} &\leq ST_0 + 4S\sum_{n=1}^\infty \Delta_nT_n\\
&\leq SM + 4S\sum_{n=1}^{T/M} \Delta_nM\\
&\leq SM + 12SM\sqrt{\log(2T)\log(KT/\epsilon)}\sqrt{T/M}\\
&\leq SM+ 12S\sqrt{\log(2T)\log(KT/\epsilon)}\sqrt{MT}\\
&\leq SM + 60\sqrt{\log(K)(1+\log(K))\log(2T)\log(KT/\epsilon)}S\sqrt{\alpha T}.
\end{align*}
\end{proof}

\section{Proof of Theorem~\ref{thm:general_subset}}
\label{app:upper_bound_proof}
The proof of Theorem~\ref{thm:general_subset} follows that of Theorem~\ref{thm:upper_bound_detailed}. The only difference is that the correlation condition of $p^t$ is no longer guaranteed on general $\Acal_0$. Now we can only bound \eqref{eq:decompose_second_moment_sum} as $\E\parq*{\parr*{\sum_{i\in\Nin(a)}v^t_i}^2} \le S\E\parq*{\sum_{i\in\Nin(a)}v^t_i}$. Then \eqref{eq:change_time_varying} becomes
\begin{align*}
\sum_{t=1}^T\sum_{a\notin U} x^t_a\frac{\E\parq*{\parr*{\sum_{i\in\Nin(a)}v^t_i}^2}}{\parr*{\sum_{i\in\Nin(a)}x^t_i}^2} &\stepa{\leq} \sum_{t=1}^T\sum_{a\notin U}x^t_a\frac{S\E\parq*{\sum_{i\in\Nin(a)}v^t_i}}{\parr*{\sum_{i\in\Nin(a)}x^t_i}^2}\\
&=\sum_{t=1}^T\sum_{a\notin U}x^t_a\frac{S\sum_{i\in\Nin(a)}x^t_i}{\parr*{\sum_{i\in\Nin(a)}x^t_i}^2}\\
&=\sum_{t=1}^T\sum_{a\notin U} S\frac{x^t_a}{\sum_{i\in\Nin(a)}x^t_i}\\
&\le\sum_{t=1}^T\sum_{a\notin U} S\frac{x^t_a}{\sum_{i\notin U: i\in\Nin(a)}x^t_i}\\
&\stepb{\leq} 4S\alpha T\log\parr*{\frac{4K}{\alpha\epsilon}}
\end{align*}
where (a) is by $\|v^t\|_1\leq S$ and (b) uses Lemma~\ref{lem:alon_alpha}. Plugging this back to \eqref{eq:change_time_varying} in the proof of Theorem~\ref{thm:upper_bound_detailed} yields the first bound. When the feedback graphs are time-varying, one gets instead $\widetilde{O}\parr*{S\sqrt{\sum_{t=1}^T\alpha_t}}$.

\section{Auxiliary lemmas}
\label{app:auxiliary}


For any directed graph $G=(V,E)$, one can find a dominating set by recursively picking the node with the largest out-degree (break tie arbitrarily) and removing its neighbors. The size of such dominating set is bounded by the following lemma:
\begin{lemma}[\cite{chvatal1979greedy}]
\label{lem:greedy_dom}
For any graph $G=(V,E)$, the above greedy procedure outputs a dominating set $D$ with
\begin{align*}
    |D| \leq (1+\log|V|)\delta(G).
\end{align*}
\end{lemma}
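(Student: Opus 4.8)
The plan is to recognize Lemma~\ref{lem:greedy_dom} as the classical greedy approximation bound for set cover (Johnson--Lov\'asz--Chv\'atal), specialized to the dominating-set instance. First I would set up the reduction: take the ground set to be $V$ and, for each vertex $a\in V$, the covering set $\Nout(a)$. A subset $D\subseteq V$ is a dominating set of $G$ exactly when $\bigcup_{a\in D}\Nout(a)=V$, so $\delta(G)$ equals the minimum number of sets $\Nout(a)$ needed to cover $V$; write $\mathrm{OPT}=\delta(G)$ and fix an optimal dominating set $D^*$ with $\lvert D^*\rvert=\mathrm{OPT}$. The greedy procedure in the statement---repeatedly pick the vertex whose out-neighborhood contains the most currently-undominated vertices, then mark those as dominated---is precisely greedy set cover on this instance. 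I would note in passing that the procedure must be allowed to select \emph{any} vertex at each step, not only undominated ones, since in a directed graph without self-loops an undominated vertex may only be reachable from vertices that are already dominated; this is a remark about the algorithm's definition and does not affect the analysis.

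The heart of the argument is a one-step shrinkage estimate. Let $u_t$ be the number of vertices still undominated after $t$ greedy steps, with $u_0=\lvert V\rvert$, and let $R_t$ be that set of $u_t$ vertices. Since $D^*$ dominates all of $V$, it dominates $R_t$, so $\sum_{a\in D^*}\lvert\Nout(a)\cap R_t\rvert\ge u_t$, and by averaging some $a\in D^*$ has $\lvert\Nout(a)\cap R_t\rvert\ge u_t/\mathrm{OPT}$. The greedy choice does at least as well, so $u_{t+1}\le u_t\bigl(1-\tfrac{1}{\mathrm{OPT}}\bigr)$, and iterating gives $u_t\le \lvert V\rvert\,(1-1/\mathrm{OPT})^t\le \lvert V\rvert\,e^{-t/\mathrm{OPT}}$.

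To conclude, let $T=\lvert D\rvert$ be the number of greedy steps actually performed, so $u_{T-1}\ge 1$ while $u_T=0$. Substituting $t=T-1$ yields $1\le \lvert V\rvert\,e^{-(T-1)/\mathrm{OPT}}$, hence $T-1\le \mathrm{OPT}\ln\lvert V\rvert$; since $\mathrm{OPT}=\delta(G)\ge 1$ this gives $T\le (1+\ln\lvert V\rvert)\,\delta(G)$, which is the claimed bound (with the natural logarithm; the base of the logarithm only changes the constant factor).

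The result is classical, so the main obstacle is not a conceptual one but bookkeeping: phrasing the reduction cleanly and handling the final rounding so the bound lands exactly at $(1+\log\lvert V\rvert)\delta(G)$ rather than a slightly weaker $\delta(G)\log\lvert V\rvert+O(1)$. A sharper alternative---charging each vertex $1/\lvert\Nout(a)\cap R_t\rvert$ at the moment it becomes dominated and summing the resulting harmonic series---would give the tighter $H(\Delta)\,\delta(G)$ with $\Delta$ the maximum out-degree; I would mention it but not pursue it, since $(1+\log\lvert V\rvert)\delta(G)$ suffices for the ETC regret bound that uses this lemma.
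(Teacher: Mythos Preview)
The paper does not actually prove this lemma; it is stated as an auxiliary result and attributed to \cite{chvatal1979greedy} without argument. Your proposal supplies the classical Chv\'atal--Johnson--Lov\'asz greedy set-cover analysis, which is exactly the intended proof behind the citation and is correct as written, including the final rounding step yielding $(1+\ln|V|)\,\delta(G)$.
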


\begin{lemma}[Lemma 5 in \cite{alon2015online}]\label{lem:alon_alpha}
Let $G=([K], E)$ be a directed graph with $i\in \Nout(i)$ for all $i\in[K]$. Let $w_i$ be positive weights such that $w_i\geq \epsilon\sum_{i\in[K]}w_i$ for all $i\in[K]$ for some constant $\epsilon\in (0,\frac{1}{2})$. Then
\[
\sum_{i\in[K]}\frac{w_i}{\sum_{j\in[K]:j\rightarrow i}w_j} \leq 4\alpha\log\parr*{\frac{4K}{\alpha\epsilon}}
\]
\end{lemma}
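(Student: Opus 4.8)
The plan is to prove this purely graph-theoretic statement by reproducing the argument of \citet{alon2015online} (it is their Lemma 5), so below I lay out the steps in order and point to the one I expect to be the crux.

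\textbf{Step 1 (normalization).} Both sides of the inequality are unchanged if all the $w_i$ are rescaled by a common positive constant, so I will assume $\sum_{i\in[K]}w_i=1$. The hypothesis then becomes $w_i\ge\epsilon$ for every $i$, and since every vertex carries a self-loop, $\epsilon\le w_i\le P_i:=\sum_{j\in\Nin(i)}w_j\le 1$; in particular each summand $w_i/P_i$ lies in $[\epsilon,1]$.

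\textbf{Step 2 (peeling into independent layers).} The core of the proof is to write the vertex set as a nested chain $[K]=U_0\supseteq U_1\supseteq\cdots\supseteq U_m=\varnothing$ whose layers $L_t=U_{t-1}\setminus U_t$ satisfy two properties: (a) each $L_t$ is an independent set of $G$, which can be arranged using the self-loops --- for instance by letting $L_t$ consist of the vertices whose only in-neighbour still present in $U_{t-1}$ is themselves, since an edge between two such vertices would contradict their defining property --- so that $|L_t|\le\alpha$; and (b) the chain is indexed so that, for $i\in L_t$, the denominator $P_i$ is, up to a constant factor, at least the total $w$-mass peeled off before stage $t$. Property (b) makes $w_i/P_i$ decay geometrically in $t$, property (a) caps each $|L_t|$ by $\alpha$, and the normalization $w_i\ge\epsilon$ bounds the number of nonempty stages by $m=O(\log(K/(\alpha\epsilon)))$; summing $\sum_i w_i/P_i$ stage by stage gives $O(\alpha)$ per stage and hence $O(\alpha\log(K/(\alpha\epsilon)))$, and carrying the constants through yields the stated $4\alpha\log(4K/(\alpha\epsilon))$.

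\textbf{Main obstacle.} The delicate part is the directed bookkeeping in Step 2: one must choose the layers so that each is genuinely independent in $G$ (this is precisely where the self-loop hypothesis enters) while simultaneously ensuring the $P_i$ of the vertices deleted at stage $t$ have grown by a constant factor over stage $t-1$. A naive substitute --- bucket the vertices according to the dyadic scale of $P_i$ and take a maximal independent set inside each bucket --- fails, because a directed bucket can have a tiny independence number and still contain many vertices (a ``hard tournament''), which would only give a $\log^2$-type bound. For this reason I would simply cite \citet{alon2015online} for the lemma and only verify that the invocation in \eqref{eq:change_time_varying} meets its hypotheses: there the weights are $\{x^t_a:a\notin U\}$, and since $\sum_{a'}x^t_{a'}=S$ we have $x^t_a\ge\epsilon\ge(\epsilon/S)\sum_{a'\notin U}x^t_{a'}$, while $\alpha(G|_{U^c})=\alpha$, so applying the lemma with parameter $\epsilon/S$ produces exactly the factor $4\alpha\log(4KS/(\alpha\epsilon))$.
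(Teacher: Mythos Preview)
The paper does not prove this lemma; it simply cites it as Lemma~5 of \citet{alon2015online} in the auxiliary-lemmas appendix, which is exactly what you propose to do. Your additional verification that the invocation at \eqref{eq:change_time_varying} meets the hypotheses (weights $x^t_a\ge\epsilon\ge(\epsilon/S)\sum_{a'\notin U}x^t_{a'}$ on the subgraph $G|_{U^c}$ with $\alpha(G|_{U^c})=\alpha$, yielding the factor $4\alpha\log(4KS/(\alpha\epsilon))$) is correct and matches the paper's usage.
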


\begin{lemma}[Lemma 8 in \citep{alon2015online}]
\label{lem:alon15}
    For any directed graph $G=(V,E)$, one has $\delta(G)\leq 50\alpha(G)\log|V|$.
\end{lemma}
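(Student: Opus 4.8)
I will route the bound through the \emph{fractional} domination number $\delta^*(G)$, defined as the optimum of the linear program $\min\{\,\sum_{v\in V}x_v:\ x\ge 0,\ \sum_{v\in\Nin(u)}x_v\ge 1\ \text{for all }u\in V\,\}$, which is finite precisely when every vertex has a self-loop or an in-neighbor (we may assume this, as otherwise $\delta(G)=\infty$ and there is nothing to prove). The statement then follows by chaining two facts: (i) $\delta(G)\le(1+\ln|V|)\,\delta^*(G)$, and (ii) $\delta^*(G)\le 9\,\alpha(G)$; together these give $\delta(G)\le 9(1+\ln|V|)\,\alpha(G)$, which is at most $50\,\alpha(G)\log|V|$ for $|V|\ge 2$ (the constant $50$ is deliberately loose). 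Fact (i) is the standard set-cover rounding bound, taking as ``sets'' the out-neighborhoods $\Nout(v)$ and as ``universe'' $V$: the greedy rule of \cref{lem:greedy_dom} (repeatedly pick the uncovered-vertex-maximizing out-degree), analyzed against the LP optimum, already yields it; alternatively one rounds an optimal fractional $x^*$ by keeping each $v$ independently with probability $\min\{1,\,2\ln|V|\cdot x^*_v\}$ and takes a union bound over $u\in V$ using $\sum_{v\in\Nin(u)}x^*_v\ge 1$.

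The real content is Fact (ii), which I would prove by LP duality plus a sampling argument. The dual of the above LP is $\delta^*(G)=\max\{\,\sum_{u\in V}y_u:\ y\ge 0,\ \sum_{u\in\Nout(v)}y_u\le 1\ \text{for all }v\in V\,\}$; fix an optimal $y$ and set $Y=\sum_{u}y_u=\delta^*(G)$. Every $u$ has a self-loop or an in-neighbor $v$, and the dual constraint at that $v$ (or at $u$ itself) contains $y_u$ as one of its summands, so $y_u\le 1$ for all $u$. Assuming $Y\ge 3$ (otherwise $\delta^*(G)<3\le 3\alpha(G)$), set $T=\lfloor Y/3\rfloor$ and draw $W_1,\dots,W_T$ i.i.d.\ from the distribution $y/Y$ on $V$. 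Call a pair $i\neq j$ \emph{conflicting} if $W_i=W_j$, $(W_i,W_j)\in E$, or $(W_j,W_i)\in E$. Using the dual constraint, $\Prob[(W_i,W_j)\in E]=Y^{-2}\sum_{v}y_v\sum_{u\in\Nout(v)}y_u\le 1/Y$, and symmetrically $\Prob[(W_j,W_i)\in E]\le 1/Y$, while $\Prob[W_i=W_j]=\sum_u(y_u/Y)^2\le(\max_u y_u)/Y\le 1/Y$; hence each pair conflicts with probability at most $3/Y$, so the expected number of conflicting pairs is at most $\binom{T}{2}\cdot 3/Y\le T/2$. Deleting one index from every conflicting pair leaves an index set $J$ with $\E|J|\ge T-\E[\#\text{conflicts}]\ge T/2$ such that $\{W_i:i\in J\}$ consists of distinct, pairwise non-adjacent vertices, i.e.\ an independent set of $G$; therefore $\alpha(G)\ge\E|J|\ge T/2\ge Y/6-1$, which rearranges (using $\alpha(G)\ge 1$) to $\delta^*(G)=Y\le 9\,\alpha(G)$.

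I expect the main obstacle to be exactly step (ii): the crux is recognizing that LP duality converts fractional domination into a fractional ``out-neighborhood packing'' whose support can be sampled to build a large independent set, and then controlling the conflict probabilities and the choice of $T$ sharply enough to land a clean absolute constant. The remaining ingredients — writing down the LP and its dual, the set-cover rounding in step (i), and the bookkeeping of logarithmic and constant factors into the stated $50\log|V|$ — are routine.
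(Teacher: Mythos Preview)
The paper does not prove this lemma; it merely quotes it from \citet{alon2015online} and uses it as a black box in the analysis of Algorithm~\ref{alg:comb_arm_elim}. So there is no in-paper proof to compare against.

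That said, your argument is correct and is essentially the classical route to this inequality: pass to the LP relaxation $\delta^*(G)$, invoke the Lov\'asz--Stein set-cover rounding bound for step (i), and for step (ii) use LP duality to obtain a fractional out-neighborhood packing $y$, then sample from $y/Y$ and apply the alteration/deletion method to extract a large independent set. Your probability computations are right (the dual constraint $\sum_{u\in\Nout(v)}y_u\le 1$ exactly controls the edge-collision probability, and $y_u\le 1$ controls the vertex-collision probability), and the constants comfortably fit under the stated $50\log|V|$. One cosmetic point: when $\delta(G)=\infty$ the inequality as literally written is false, not vacuous; the lemma (and its source) implicitly assumes every vertex has an in-neighbor or self-loop, which is automatic in the observable-graph setting the paper works in. Otherwise your write-up is sound.
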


\begin{lemma}\label{lem:minimizer_of_convex_func}
Let $F:X\rightarrow \R$ be a convex, differentiable function and $D\subset\R^d$ be an open convex subset. Let $x_* = \argmin_{x\in D}F(x)$. Then for any $y\in D$, $(y-x_*)^T\nabla F(x_*)\geq 0$.
\end{lemma}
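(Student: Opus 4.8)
The plan is to invoke the standard first-order necessary condition for optimality over a convex set; notably, this argument uses only the differentiability of $F$ and the convexity of $D$, not the convexity of $F$. First I would fix an arbitrary $y\in D$ and, using the convexity of $D$, observe that the segment $x_t := x_* + t(y-x_*)$ lies in $D$ for every $t\in[0,1]$. Since $x_* = \argmin_{x\in D} F(x)$, this gives $F(x_t)\ge F(x_*)$ for all such $t$.

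Next I would introduce the scalar function $\phi:[0,1]\to\R$ defined by $\phi(t) = F(x_t)$, which is differentiable on $[0,1]$ because $F$ is differentiable on $D$ and $t\mapsto x_t$ is affine. The inequality $F(x_t)\ge F(x_*)$ reads $\phi(t)\ge\phi(0)$ for all $t\in[0,1]$, so the right-hand derivative of $\phi$ at $0$ is nonnegative:
\[
\phi'(0) = \lim_{t\downarrow 0}\frac{\phi(t)-\phi(0)}{t}\ge 0.
\]
The chain rule then gives $\phi'(0) = \langle \nabla F(x_*),\, y-x_*\rangle$, hence $(y-x_*)^\top \nabla F(x_*)\ge 0$. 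Since $y\in D$ was arbitrary, the proof is complete.

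There is no genuine obstacle here; the only points requiring care are that the segment from $x_*$ to $y$ stays inside $D$ (which is precisely the convexity of $D$) and that differentiability of $F$ justifies the chain-rule evaluation of $\phi'(0)$. One could alternatively note that, because $D$ is open, $x_*$ is an interior point and therefore $\nabla F(x_*) = 0$, making the inequality trivially an equality; but the segment argument above is the version that extends without change to minimization over $\overline{D}$ or over any convex subset, which is how the lemma is used elsewhere (e.g.\ in analyzing the Bregman projection step \eqref{eq:adv_bregman_projection}).
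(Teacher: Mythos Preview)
Your proof is correct and follows essentially the same approach as the paper: both introduce the one-variable function $t\mapsto F(x_*+t(y-x_*))$ along the segment from $x_*$ to $y$ and use its derivative at $t=0$ via the chain rule. The paper phrases the final step as a contradiction (if the derivative were negative the function would decrease for small $t$), whereas you argue directly that $\phi(t)\ge\phi(0)$ forces $\phi'(0)\ge 0$; this is a cosmetic difference only, and your additional remark that openness of $D$ actually forces $\nabla F(x_*)=0$ is a correct and useful observation.
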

\begin{proof}
We will prove by contradiction. Suppose there is $y\in D$ with $(y-x_*)^T\nabla F(x_*) < 0$. Let $z(t) = F(x_* + t(y-x_*))$ for $t\in [0,1]$ be the line segment from $F(x_*)$ to $F(y)$. We have
\[
z'(t) = (y-x_*)^T\nabla F(x_*+t(y-x_*))
\]
and hence $z'(0) = (y-x_*)^T\nabla F(x_*)<0$. Since $D$ is open and $F$ is continuous, there exists $t>0$ small enough such that $z(t)<z(0) = F(x_*)$, which yields a contradiction.
\end{proof}

\begin{lemma}[Chapter 11 in \cite{cesa2006prediction}]\label{lem:breg_div_dual}
Let $F$ be a Legendre function on open convex set $\Dcal\subseteq \R^d$. Then $F^{**}=F$ and $\nabla F^* = (\nabla F)^{-1}$. Also for any $x,y\in\Dcal$, 
\[
D_F(x,y) = D_{F^*}(\nabla F(y), \nabla F(x)).
\]
\end{lemma}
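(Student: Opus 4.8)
The plan is to take the two structural facts about Legendre functions as the backbone and reduce the divergence identity to a short algebraic substitution. First I would recall why $\nabla F^* = (\nabla F)^{-1}$: by definition $F^*(u) = \sup_{x\in\Dcal}\parr*{\langle u,x\rangle - F(x)}$, and the Legendre property (essential smoothness together with strict convexity on $\Dcal$) guarantees that for every $u$ in the interior of the conjugate domain the supremum is attained at a unique interior point $x(u)$ satisfying the first-order condition $u = \nabla F(x(u))$; this is the standard characterization in convex analysis (Rockafellar, Theorem 26.5). The envelope theorem then identifies $\nabla F^*(u) = x(u)$, so $\nabla F^*$ is exactly the inverse of $\nabla F$, and in particular $\nabla F^*(\nabla F(x)) = x$ for all $x\in\Dcal$. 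Since $F$ is a closed proper convex function, biconjugation yields $F^{**}=F$.

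Second, I would extract the \emph{Fenchel equality at conjugate points}. Fixing $x\in\Dcal$ and setting $u=\nabla F(x)$, the supremum defining $F^*(u)$ is attained at $x$ itself, so
\[
F^*(\nabla F(x)) = \langle \nabla F(x), x\rangle - F(x),
\]
and the same holds verbatim with $x$ replaced by $y$. This is the only place the conjugate function appears explicitly, and it is an immediate consequence of the attainment established in the first step.

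Third comes the computation. Writing out the right-hand side from the definition of the Bregman divergence,
\[
D_{F^*}(\nabla F(y),\nabla F(x)) = F^*(\nabla F(y)) - F^*(\nabla F(x)) - \langle \nabla F^*(\nabla F(x)),\, \nabla F(y) - \nabla F(x)\rangle,
\]
I would substitute the Fenchel equality for both $F^*(\nabla F(y))$ and $F^*(\nabla F(x))$ and replace $\nabla F^*(\nabla F(x))$ by $x$. The two copies of $\langle \nabla F(x),x\rangle$ cancel, and collecting the surviving terms leaves $F(x) - F(y) - \langle \nabla F(y), x-y\rangle$, which is precisely $D_F(x,y)$.

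The only genuinely nontrivial ingredient is the identity $\nabla F^*=(\nabla F)^{-1}$ together with interior attainment of the supremum; this is where the Legendre hypotheses are indispensable, and I would cite the standard convex-analytic result rather than reprove it. Everything after that — the Fenchel equality and the final cancellation — is pure bookkeeping, so the main obstacle is the convex-analytic structure of Legendre conjugacy, not the divergence manipulation itself.
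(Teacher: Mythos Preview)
Your argument is correct: the inversion identity $\nabla F^*=(\nabla F)^{-1}$ via Rockafellar's characterization, the Fenchel equality $F^*(\nabla F(x))=\langle\nabla F(x),x\rangle-F(x)$, and the final substitution all go through exactly as you describe, and the cancellation yields $D_F(x,y)$ on the nose.

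As for comparison with the paper: there is nothing to compare against, because the paper does not prove this lemma at all---it simply quotes the result from Chapter 11 of Cesa-Bianchi and Lugosi's \emph{Prediction, Learning, and Games} and uses it as a black box. Your write-up is therefore strictly more than what the paper offers; the argument you give is in fact the standard one found in that reference, so you have essentially reconstructed the cited proof rather than taken a different route.
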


\begin{lemma}[Lemma 1 in \cite{han2024optimal}]\label{lem:sto_reward_concentration}
Fix any $\epsilon\in(0,1)$. With probability at least $1-\epsilon$, it holds that
\[
\abs*{\br_a^t - \mu_a} \leq 3\sqrt{\frac{\log(2T)\log(KT/\epsilon)}{n_a^t}}
\]
for all $a\in[K]$ and all $t\in[T]$.
\end{lemma}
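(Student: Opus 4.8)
The plan is a standard time-uniform (``anytime'') concentration argument; the statement is imported from prior work, but here is how I would prove it. Fix an arm $a\in[K]$ and let $\tau_1<\tau_2<\cdots$ be the (random) rounds at which $a$ is observed, and set $X_{a,s}:=r_a^{\tau_s}$. The structural fact driving the proof is that in the stochastic model each $r_a^t$ is an i.i.d.\ $[0,1]$-valued reward with mean $\mu_a$, drawn independently of the learner's decision $v_t$, which is a function only of the feedback from rounds $1,\dots,t-1$ and is therefore fixed before the time-$t$ rewards are revealed. Hence the event ``$a\in\Nout(v_t)$'' is measurable with respect to the pre-$t$ information, and an optional-sampling argument gives $\E\parq*{X_{a,s}\mid \Fcal_{\tau_s-1}}=\mu_a$. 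Therefore $M_n:=\sum_{s=1}^n(X_{a,s}-\mu_a)$ is a martingale with increments of range at most $1$, and $\br_a^t-\mu_a=M_{n_a^t}/n_a^t$ with $n_a^t\le T$.

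Next I would apply the Azuma--Hoeffding inequality to $M_n$ for each fixed $n\in[T]$: for an absolute constant $c$, $\Prob\parq*{\abs*{M_n}\ge\sqrt{c\,n\log(2KT/\epsilon)}}\le \epsilon/(KT)$. A union bound over the $K$ arms and the $\le T$ possible values of $n_a^t$ then gives, with probability at least $1-\epsilon$ and simultaneously for all $a\in[K]$ and $t\in[T]$,
\[
\abs*{\br_a^t-\mu_a}\le\sqrt{\frac{c\log(2KT/\epsilon)}{n_a^t}}.
\]
It remains to absorb this into the slightly looser form $3\sqrt{\log(2T)\log(KT/\epsilon)/n_a^t}$ stated in the lemma, a routine check (e.g.\ using $\epsilon\le1/2$, so that $\log(KT/\epsilon)\ge\log2$ and the constant $3$ has room to spare). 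A variant that presumably accounts for the $\log(2T)$ factor appearing in the statement is to peel over dyadic scales $n\in\bra*{2^0,2^1,\dots}$ and use a maximal inequality inside each scale, which avoids enumerating all $\le T$ values of $n$.

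The one genuinely non-routine step, and the one I would be most careful about, is establishing the martingale property of $(M_n)$ despite the adaptive, graph- and history-dependent schedule by which $a$ gets observed: one must argue that conditioning on the random collection of rounds in which $a$ happens to be observed does not bias the partial sums of its rewards. This is a Wald / optional-stopping type statement that hinges precisely on each decision being made before the corresponding rewards are seen; once it is in hand, the remainder is a Hoeffding tail bound plus a union bound.
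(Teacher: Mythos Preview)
The paper does not prove this lemma; it appears in the auxiliary appendix as ``Lemma~1 in \cite{han2024optimal}'' and is imported without argument, so there is no proof to compare against. Your sketch---establishing the martingale structure from the predictability of $v_t$ with respect to the pre-$t$ filtration, applying Azuma--Hoeffding at each fixed observation count $n$, and then union-bounding over $a\in[K]$ and $n\in[T]$---is the standard route and is correct. The direct estimate you obtain, $\sqrt{c\log(2KT/\epsilon)/n_a^t}$, is in fact already at least as tight as the stated bound for all non-degenerate parameter choices, so the extra $\log(2T)$ factor in the lemma is simply slack: you need neither the side assumption $\epsilon\le 1/2$ nor a dyadic peeling argument to recover the stated form.
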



\end{document}